\documentclass[11pt,letterpaper]{article}

\usepackage[margin=1in]{geometry}
\usepackage{amsmath,amssymb,amsthm}
\usepackage{graphicx}
\usepackage{subcaption}
\usepackage{tikz}
\usetikzlibrary{positioning,arrows.meta,calc}
\usepackage{booktabs}
\usepackage{multirow}
\usepackage{array}
\usepackage{xcolor}
\usepackage{url}
\usepackage[american]{babel}
\usepackage{csquotes}
\usepackage[style=apa,backend=biber,natbib=true]{biblatex}
\usepackage{hyperref}
\usepackage{setspace}
\usepackage{enumitem}
\usepackage{algorithm}
\usepackage{algpseudocode}

\hypersetup{
  colorlinks=true,
  linkcolor=blue,
  citecolor=blue,
  urlcolor=blue
}

\newcommand{\E}{\mathbb{E}}
\newcommand{\Prob}{\mathbb{P}}
\newcommand{\Var}{\mathrm{Var}}

\newcommand{\diag}{\mathrm{diag}}
\newcommand{\sg}{\mathrm{sg}}
\newcommand{\R}{\mathbb{R}}
\newcommand{\cS}{\mathcal{S}}
\newcommand{\cC}{\mathcal{C}}
\newcommand{\cF}{\mathcal{F}}
\newcommand{\cD}{\mathcal{D}}

\newcommand{\one}{\mathbf{1}}

\newtheorem{definition}{Definition}
\newtheorem{assumption}{Assumption}
\newtheorem{theorem}{Theorem}
\newtheorem{proposition}{Proposition}

\newtheorem{remark}{Remark}

\title{General Value Functions for Remaining Useful Life and Failure-Mode Prediction}

\author{%
  Hao Yan$^{1}$\thanks{Corresponding author: \textit{haoyan@asu.edu}.},
  Ali Sarabi$^{1}$,
  Qing Zou$^{1}$\thanks{Qing Zou conducted the research while he was a visiting scholar at Arizona State University (\textit{qingzou@asu.edu}).},
  Boyang Xu$^{1}$\\
  $^{1}$School of Computing and Augmented Intelligence, Arizona State University}

\date{}

\begin{document}
\setstretch{1.5}
\maketitle

\begin{abstract}
Remaining useful life (RUL) prediction and failure-mode classification are central tasks in predictive maintenance. Many data-driven pipelines use fixed-window supervised learning with complete terminal labels; such routes do not naturally encode the temporal recursion linking successive degradation-state predictions when observations are partial or unit identities are unavailable. We formulate prognostics as vector General Value Function (GVF) prediction on an absorbing degradation process, treating RUL and failure-mode probabilities as temporally consistent targets rather than independent window-level labels, and estimate them with a multi-step temporal-difference estimator, TD($n,\lambda$). Supporting theory identifies the Bellman fixed point of the vector GVFs, characterizes the linear projected-TD limit and its relation to complete-return Monte Carlo regression under realizability, and explains when bootstrapped TD targets are less variable than Monte Carlo returns. On an event-triggered multimode simulation and NASA C-MAPSS label-scarce stitch data, TD improves RUL and failure-mode prediction relative to a supervised same-backbone Monte Carlo control, especially under scarce complete labels. Practically, fragmented, identity-free degradation records can contribute local Bellman transitions instead of being discarded until complete run-to-failure labels are available.
\end{abstract}

\noindent\textbf{Keywords:} General value functions; remaining useful life; temporal-difference learning; failure-mode classification; prognostics; absorbing Markov processes.


\section{Introduction}
\label{sec:introduction}

Modern engineered systems are increasingly monitored through streams of sensor measurements---vibration, temperature, pressure, flow, emissions, and related condition indicators. These signals evolve as systems operate, age, and degrade. Prognostics uses such trajectories to forecast reliability outcomes, especially remaining useful life (RUL) and, when several failure mechanisms are possible, the likely failure mode. Accurate prognostic predictions support condition-based maintenance by allowing service actions to be planned before unexpected breakdowns \citep{chen2013condition, song2018statistical, li2007failure, zhou2014remaining}.

A large literature has studied RUL and failure-mode prediction through statistical degradation models, health-index construction, and supervised machine and deep learning \citep{kontar2017rulprediction, liu2013data, Li2018remaining, fu2025degradation}. Indirect health-index approaches provide interpretable degradation representations but depend on the validity of the selected surrogate dynamics, thresholds, and extrapolation model. Direct supervised approaches map sensor histories to RUL or terminal-mode labels, but often represent successive windows as separate training examples and impose temporal coherence only through architecture or auxiliary penalties. Both families are most effective when abundant run-to-failure trajectories and terminal labels are available; Section~\ref{sec:literature_review} reviews these approaches in detail.

This paper instead foregrounds the temporal structure of the prognostic target. RUL and failure-mode probabilities evolve along a degradation trajectory rather than forming independent window-level labels. Conditional on continued operation, realized RUL decreases by one operating step between successive states. Failure-mode probabilities may change as new evidence arrives, but their updates should remain consistent with the future evolution implied by the successor state. Existing methods often encourage this behavior indirectly through monotonic health indices, smoothness penalties, or successive-window consistency terms \citep{liu2013data, kontar2017rulprediction, jahani2020remaining, fu2025degradation}. We instead define temporal semantics directly at the target level through hitting times and absorption probabilities.

At each time, prediction is conditioned on a \emph{predictive state} constructed from the available observation history---for example, a sensor vector, a recent history window, a recurrent summary, or a learned health index---rather than on an unobserved physical state. The unit evolves through this predictive-state space until reaching an absorbing failure set $\cF$. RUL is the expected time until the trajectory first reaches $\cF$. When the terminal boundary is partitioned as
$\cF=\bigcup_{k=1}^{K}\cF_k$,
failure-mode prediction asks for the probability of eventual absorption in each mode-specific set $\cF_k$. RUL and failure-mode classification are therefore related questions about the same future terminal event, rather than unrelated regression and classification tasks. This perspective separates the construction of the predictive state from the definition of the prognostic target: different sensor representations may be used without changing the hitting-time and hitting-probability quantities being predicted.

These quantities share a forward-looking structure: each prediction summarizes future evolution until a terminal event occurs. \emph{General value functions} (GVFs) provide a compact language for defining such temporally extended predictions \citep{sutton2011horde, white2015developing}. In the present setting, value functions are used purely for prediction. There are no control actions, maintenance policies, or optimization over decisions. RUL is represented by accumulating operating time until absorption, while each failure-mode quantity accumulates a mode-specific terminal event.

The GVF formulation also supplies the desired temporal consistency through a Bellman equation. A prediction at the current state must agree with the immediate transition and the prediction made from the successor state. For RUL, conditional on survival, remaining life equals one elapsed step plus the expected remaining life from the next state. For failure modes, mode-specific hitting probabilities propagate through successor states until absorption. The Bellman relation is therefore used as the consistency equation for dynamic prognosis.
Figure~\ref{fig:rul_gvf_analogy} illustrates this shared
hitting-time and hitting-probability interpretation.

\begin{figure}[htbp]
\centering
\includegraphics[width=0.6\linewidth]{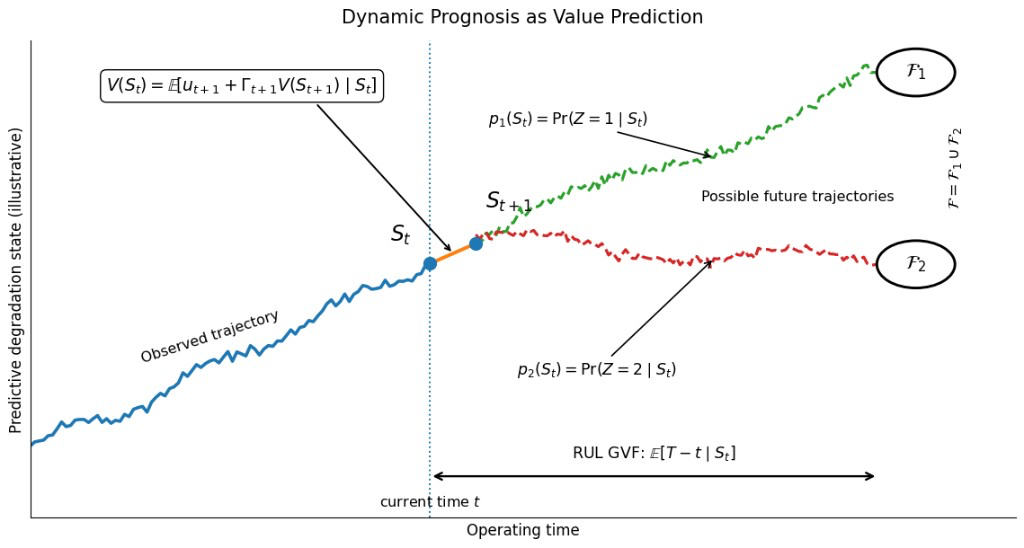}
\caption{\textbf{Dynamic prognosis as value prediction.} A unit evolves until absorption in $\cF=\bigcup_k\cF_k$; RUL is the expected time to failure, and mode prediction is the hitting probability of each $\cF_k$. Successive forecasts satisfy a Bellman recursion along the absorbing degradation process.}
\label{fig:rul_gvf_analogy}
\end{figure}

This viewpoint clarifies the distinction between \emph{Monte Carlo} (MC) and \emph{temporal-difference} (TD) estimation. Conventional supervised RUL regression uses complete run-to-failure labels, corresponding to complete Monte Carlo returns. Such targets are effective when terminal labels are abundant, but they cannot be formed directly for censored, incomplete, or anonymous nonterminal trajectory segments. TD learning instead combines observed local transitions with bootstrapped successor predictions. Partial trajectories can therefore contribute Bellman-consistent training targets before the eventual failure time or mode is observed. We develop a multi-step TD($n,\lambda$) estimator for the resulting vector GVF.

The failure boundary and its mode-specific subsets may be defined
operationally from physical thresholds, benchmark labels, maintenance
records, or expert annotations.

The contributions of this paper are as follows.
\begin{enumerate}[leftmargin=*]
\item We formulate dynamic prognostics as vector-GVF prediction on an absorbing degradation process, unifying RUL and failure-mode prediction as temporally extended hitting-time and hitting-probability questions rather than independent supervised tasks. We further develop a TD($n,\lambda$) estimator that enables bootstrapped learning from partial trajectories.
\item Drawing on standard TD analysis, we identify the Bellman fixed point targeted by the vector GVFs, characterize the linear projected-TD limit and its relation to complete-return Monte Carlo regression under realizability, and provide a target-variance decomposition explaining when bootstrapped TD targets can be less variable than complete Monte Carlo returns.
\item We evaluate the method on an event-triggered multimode simulation and the NASA C-MAPSS benchmark, emphasizing heterogeneous degradation, scarce terminal labels, and anonymous partial-trajectory observations.
\end{enumerate}


\section{Literature Review}
\label{sec:literature_review}

We distinguish \emph{direct} and \emph{indirect} methods by the prognostic supervision target---whether RUL or failure risk is inferred directly from observations or through an intermediate health representation---rather than by whether the method is statistical, model-based, or deep learning-based.

\subsection{Direct Prognostic Methods}
\label{sec:direct_prognostics}

In this paper, a \emph{direct prognostic method} is one whose learning or estimation target is the prognostic outcome itself---e.g., a pointwise RUL label $Y_{it}=T_i-t$, an event-time and censoring pair $(T_i,\delta_i)$, a failure-mode label, or a terminal degradation outcome---regardless of whether the model is statistical or neural. Survival and lifetime models \citep{li2007failure, yue2021joint} estimate failure risk or residual life from event-time data. Stochastic degradation-process \citep{ye2014inverse, fang2024class, vannoortwijk2009survey}, mixed-effects \citep{kontar2017rulprediction}, and Bayesian \citep{gebraeel2005residual} approaches may posit an explicit degradation law, but remain direct when parameters are fit to observed paths and the inferential target is RUL or time to failure; a parametric degradation structure alone does not make a method indirect.

Data-driven direct methods map condition-monitoring measurements or engineered features to RUL, failure time, or mode labels, including recurrent networks \citep{zheng2017long}, support-vector regressors \citep{khelif2017direct}, and Bayesian deep regressors \citep{kim2020bayesian}. Modern deep networks minimize supervised losses on RUL, capped RUL, or failure outputs using CNNs \citep{Li2018remaining}, Transformer architectures \citep{li2022domain}, and related deep prognostics frameworks \citep{zhao2019deep}.

Direct methods optimize the deployed prognostic quantity and are easy to benchmark, which helps explain their strong performance with end-to-end deep learning. They nonetheless require sufficient labeled degradation histories---and window-level training can induce strong within-unit correlation---while survival and degradation models rely on hazard, random-effect, or threshold assumptions. Even successful deep direct models can enforce temporal coherence through architecture or auxiliary penalties rather than through the definition of the prediction target.

\subsection{Indirect Prognostic Methods}
\label{sec:indirect_prognostics}

We use the term \emph{indirect prognostic method} for approaches that first construct an intermediate degradation representation---a health index, latent state, fused signal, or failure surface---and then derive RUL or failure risk by extrapolating, thresholding, or passing it to a downstream reliability model. Health-index methods fuse multivariate sensor streams into scalar or low-dimensional indicators \citep{liu2013data}.

Indirect routes offer interpretable degradation coordinates and support multisensor fusion across operating regimes \citep{yan2016}, composite health-index modeling \citep{song2018statistical}, prognostics under multiple failure modes \citep{chehade2018data, fu2025degradation}, and failure-surface frameworks \citep{song2019IISE}. Deep encoders are increasingly used for health-index construction via RNN indicators \citep{guo2017recurrent} and shape-constrained fusion \citep{li2021shape}.

The main limitation is dependence on this intermediate representation: a poorly specified health index can bias downstream RUL even when the second-stage model is sound, and choices about fusion, threshold, and extrapolation can propagate misspecification through the pipeline.

\subsection{Multi-Task Prognostics}
\label{sec:multitask_review}

Traditional pipelines often treat RUL regression and failure-mode classification as separate stages, although reliability methods can couple them through joint time-series--survival models \citep{yue2021joint}, failure-surface formulations \citep{song2019IISE}, and degradation modeling under unknown failure modes \citep{fu2025degradation}. Failure modes are not terminal labels alone; they reflect distinct degradation mechanisms, rates, and sensor signatures along the same trajectory.

Multi-task learning \citep{caruana1997multitask} shares representations across related tasks to improve generalization. In prognostics, shared encoders have been used for joint degradation assessment and RUL prediction \citep{miao2019joint}, and for mode-dependent RUL with failure-mode diagnostics \citep{li2022deepbranched}.

Many supervised multi-task models nonetheless combine regression and classification losses without specifying temporal semantics: terminal classifiers need not be absorption probabilities, and RUL heads need not satisfy Bellman consistency across states.

\subsection{General Value Functions and Temporal-Difference Learning}
\label{sec:gvf_td_review}

Temporal-difference learning estimates temporally extended predictions
from bootstrapped successor values rather than complete returns
\citep{sutton1988learning,sutton2018reinforcement}. Relevant supporting
work includes convergence under function approximation
\citep{tsitsiklis1997analysis}, statistical analyses of TD targets
\citep{cheikhi2023statistical}, and rare-event prediction
\citep{cheng2024surprising}. General value functions extend value
prediction through user-specified cumulants and continuations
\citep{sutton2011horde,white2015developing}.
Their use for joint RUL and failure-mode prognosis under fragmented
trajectory supervision remains comparatively unexplored.

\section{Problem Formulation: Absorbing Degradation Processes}
\label{sec:problem_formulation}

We formalize RUL and failure-mode prediction as vector GVFs on an
absorbing degradation process.

\subsection{State, Dynamics, and Absorption}
\label{sec:state_dynamics_absorption}

Let $S_t\in\cS\subseteq\R^d$ denote the predictive state at time $t$,
constructed from the observation history available for forecasting.
The Bellman equations below are exact when $\{S_t\}$ is Markov for the
prognostic targets and approximate otherwise; predictions condition on
$S_t$ rather than on an unobserved physical state.

\begin{definition}[Absorbing degradation process]
\label{def:absorbing_degradation_process}
We model $\{S_t\}$ as a Markov process on $\cS$ with kernel
$P(\cdot\mid s)$. Partition $\cS=\cC\cup\cF$, where
$\cC=\cS\setminus\cF$ is the nonterminal operating region and $\cF$ is
an absorbing failure set, with $P(\cF\mid s)=1$ for $s\in\cF$. Define
the absorption time
\[
T=\inf\{t\ge0:S_t\in\cF\}.
\]
\end{definition}

\begin{definition}[Relative hitting time]
\label{def:relative_hitting_time}
The relative hitting time from $S_t$ is
\[
\tau_t=\inf\{h\ge0:S_{t+h}\in\cF\}.
\]
On paths with finite absorption, $\tau_t=T-t$.
\end{definition}

\begin{definition}[Failure modes]
\label{def:failure_modes}
When failures decompose into $K$ modes, partition the terminal set into
disjoint subsets $\cF_1,\ldots,\cF_K$:
\[
\cF=\bigcup_{k=1}^K\cF_k,
\qquad
\cF_j\cap\cF_k=\emptyset
\quad (j\ne k).
\]
Absorption in $\cF_k$ yields terminal mode $Z=k$.
\end{definition}

Several results in Section~\ref{sec:algorithmic_properties} additionally
assume almost-sure absorption,
$\Prob_s(\tau_t<\infty)=1$ for every $s\in\cC$; this assumption is stated
where required.

We adopt a model-free viewpoint with respect to $P$: the learning
algorithms use observed transitions $(S_t,S_{t+1})$ rather than a
closed-form plant model \citep{sutton2018reinforcement}.

Constructing the predictive state $S_t$ is distinct from specifying the
GVF targets on $(S_t,P)$. The Bellman equations are exact when $S_t$ is
Markov for those targets and otherwise define an approximation relative
to the selected representation.

\begin{remark}[Predictive state and partial observability]
In practice, $S_t$ is generally an approximate history summary rather
than the true physical state. The resulting setting is analogous to
partial observability: Bellman consistency is interpreted relative to
the selected predictive representation, not to an exact latent-state
model \citep{xu2025partially}.
\end{remark}

\subsection{Introduction to General Value Functions}
\label{sec:gvf_introduction}

\emph{General value functions} (GVFs) are defined in a setting-agnostic way: the only probabilistic object is a Markov process $\{S_t\}$ on $\cS$ with kernel $P(\cdot\mid s)$ \citep{sutton2011horde, white2015developing}. Each GVF is specified by cumulant and continuation sequences as the expected accumulated signal along the trajectory. Nothing in the definitions below requires actions, rewards in the reinforcement-learning sense, or optimality---only the process $(S_t,P)$ and the chosen cumulant and continuation sequences.

Fix $m\ge 1$ GVFs on the same $(S_t,P)$. For each $i\in\{1,\ldots,m\}$, specify an adapted \emph{cumulant} $c^{(i)}_{t+1}$ (a deterministic function of the transition when written in lowercase) and a \emph{continuation} $\gamma_{t+1}^{(i)}\in[0,1]$, which names the effective horizon for the $i$th GVF ($\gamma_{t+1}^{(i)}=0$ stops accumulation at the next step; constant $\gamma_{t+1}^{(i)}=\bar\gamma<1$ yields geometric discounting; state-dependent continuations encode pseudo-termination) \citep{sutton2018reinforcement}. In prognostics, for example, one may take $m=1+K$ to predict RUL together with $K$ failure-mode hitting probabilities.

\begin{definition}[Vector GVF]
\label{def:gvf_vector}
Stack the corresponding $m$ GVFs into
\[
V(s)=\bigl(V^{(1)}(s),\ldots,V^{(m)}(s)\bigr)^{\top}\in\R^m,
\]
where the $i$th component is
\begin{equation}
V^{(i)}(s)
=
\E\!\left[
\sum_{\ell=0}^{\infty}
\left(\prod_{j=1}^{\ell}\gamma_{t+j}^{(i)}\right)c_{t+\ell+1}^{(i)}
\;\middle|\; S_t=s
\right],
\label{eq:scalar_gvf_component}
\end{equation}
with the convention that the empty product equals one.
\end{definition}

Equation~\eqref{eq:scalar_gvf_component} is the \emph{extensive} definition: each $V^{(i)}(s)$ is an expectation of a return built from future cumulants gated by future continuations. When $m=1$ and $c_{t+1}$ coincides with a Markov reward signal, the single component reduces to the usual discounted value function. Collect instantaneous cumulants in $u_{t+1}=(c^{(1)}_{t+1},\ldots,c^{(m)}_{t+1})^{\!\top}$ and continuations in $\Gamma_{t+1}=\diag(\gamma_{t+1}^{(1)},\ldots,\gamma^{(m)}_{t+1})$. Under the Markov property, $V$ admits the \emph{vector Bellman expectation equation}
\begin{equation}
V(s)=\E\left[u_{t+1}+\Gamma_{t+1}V(S_{t+1})\mid S_t=s\right],
\label{eq:vector_bellman}
\end{equation}
together with componentwise boundary values on any terminal states where a return is pinned by definition. When $m=1$,~\eqref{eq:vector_bellman} reduces to the scalar Bellman equation $V(s)=\E[c_{t+1}+\gamma_{t+1}V(S_{t+1})\mid S_t=s]$, which is the relation bootstrapped by temporal-difference learning. Diagonal $\Gamma_{t+1}$ means each GVF component has its own horizon schedule, while $\Gamma_{t+1}V(S_{t+1})$ couples the GVFs only through the shared successor $S_{t+1}$.

\subsection{Multi-Task Prognostic Model: Two GVF Types on the Same Absorbing Degradation Process}
\label{sec:vector_gvf}

RUL prediction and
failure-mode prediction may appear as two unrelated supervised tasks, but both are
GVFs on the same stochastic process $(S_t,P)$. They differ only in cumulants,
continuations, and terminal semantics.

\begin{definition}[RUL (survival-time) GVF]
\label{def:survival_time_rul_gvf}
For $\gamma_{\mathrm{time}}\in[0,1]$, the \emph{survival-time GVF} assigns cumulant
$c_{t+1}^{\mathrm{time}}=\one\{S_t\notin\cF\}$ and continuation
$\Gamma_{t+1}^{\mathrm{time}}=\gamma_{\mathrm{time}}\one\{S_{t+1}\notin\cF\}$. For
$s\in\cC$,
\begin{equation}
V_{\gamma_{\mathrm{time}}}^{\mathrm{time}}(s)
=
\E_s\!\left[\sum_{\ell=0}^{\tau_t-1}\gamma_{\mathrm{time}}^\ell\right].
\label{eq:survival_time_gvf_main}
\end{equation}
When $\gamma_{\mathrm{time}}=1$ and $\E_s[\tau_t]<\infty$, cycle-count RUL is
$V_{1}^{\mathrm{time}}(s)=\E_s[\tau_t]$.
\end{definition}

For $\gamma_{\mathrm{time}}<1$, $V_{\gamma_{\mathrm{time}}}^{\mathrm{time}}(s)$ is a
discounted survival-time value with soft effective horizon approximately
$(1-\gamma_{\mathrm{time}})^{-1}$. Soft-horizon bounds are given in
Section~\ref{sec:gvf_readouts}.

\begin{definition}[Mode-$k$ terminal-event GVF]
\label{def:mode_terminal_event_gvf}
For each failure mode $k\in\{1,\ldots,K\}$ and
$\gamma_{\mathrm{mode}}\in[0,1]$, the \emph{mode-$k$ terminal-event GVF} assigns cumulant
$c_{t+1}^{(k)}=\one\{S_t\notin\cF,\ S_{t+1}\in\cF_k\}$ and continuation
$\Gamma_{t+1}^{(k)}=\gamma_{\mathrm{mode}}\one\{S_{t+1}\notin\cF\}$. For $s\in\cC$,
\begin{equation}
H_{\gamma_{\mathrm{mode}}}^{(k)}(s)
=
\E_s\!\left[
\gamma_{\mathrm{mode}}^{\tau_t-1}\one\{S_{t+\tau_t}\in\cF_k\}
\right].
\label{eq:mode_gvf_main}
\end{equation}
When $\gamma_{\mathrm{mode}}=1$, $H_1^{(k)}(s)=\Prob_s(Z=k)$.
\end{definition}

When $\gamma_{\mathrm{mode}}<1$, each $H_{\gamma_{\mathrm{mode}}}^{(k)}(s)$ is a discounted terminal-event GVF and need not equal $\Prob_s(Z=k)$; the urgency-reweighted interpretation of the normalized scores $\tilde p_{\gamma,k}$ is developed in Proposition~\ref{prop:terminal_event_mode_tilting}.

\paragraph{Vector Bellman equation.}
The joint prognostic GVF stacks survival-time and mode-specific components as
$V_\gamma(s)=(V_\gamma^{\mathrm{time}}(s),H_\gamma^{(1)}(s),\ldots,H_\gamma^{(K)}(s))^\top$.
It specializes Eq.~\eqref{eq:vector_bellman} with vector cumulant
$u_{t+1}^{(\gamma)}=(\one\{S_t\notin\cF\},\one\{S_t\notin\cF,S_{t+1}\in\cF_1\},\ldots,
\one\{S_t\notin\cF,S_{t+1}\in\cF_K\})^\top$ and continuation
$\Gamma_{t+1}^{(\gamma)}=\gamma\one\{S_{t+1}\notin\cF\}I_{K+1}$, i.e.,
\begin{equation}
V_\gamma(s)
=
\E\!\left[
u_{t+1}^{(\gamma)}
+
\Gamma_{t+1}^{(\gamma)}
V_\gamma(S_{t+1})
\mid S_t=s
\right].
\label{eq:prognostic_vector_bellman_main}
\end{equation}
In implementation, components need not share the same discount; for example,
$\gamma_{\mathrm{time}}<1$ for a soft-horizon RUL proxy and
$\gamma_{\mathrm{mode}}=1$ for probability-correct terminal-mode prediction.


\section{Learning Algorithm}
\label{sec:learning_algorithm}

\subsection{TD($n,\lambda$) Estimator}
\label{sec:td_estimator}

Two classical ways to estimate a value function are Monte Carlo (MC) estimation and temporal-difference (TD) estimation. In the present vector-GVF setting, both operate componentwise on the same stacked prediction vector; the difference is whether the target waits for a completed return or bootstraps from a later prediction.

\paragraph{Monte Carlo targets.}
MC estimation uses the realized return from a complete trajectory. For RUL, this is exactly the familiar supervised label $T-t$ in the undiscounted case; for a mode-probability GVF, the realized terminal return is the one-hot indicator of the failure mode. Thus conventional window-based RUL regression can be interpreted as an every-visit MC method: each windowed state $S_t$ is paired with the complete return observed from that point to failure. This interpretation is useful because it explains both the strength and the weakness of supervised RUL labels. MC targets are direct samples of the desired return and do not depend on the current network prediction, but they require terminal information and can have high variance when the remaining episode length $\tau_t$ is long or failure events are rare.

For the vector GVF, the MC target from time $t$ is
\begin{equation}
y_t^{\mathrm{MC}}
=
\sum_{\ell=0}^{\tau_t-1}
\left(\prod_{j=1}^{\ell}\Gamma_{t+j}\right)u_{t+\ell+1},
\label{eq:vector_mc_target}
\end{equation}
with the convention that an empty product equals the identity matrix. If the trajectory is fully observed until absorption, all components of $y_t^{\mathrm{MC}}$ are available. If a trajectory is censored or only partially observed, some MC components are unavailable because the terminal event or terminal mode has not been observed; those components must be masked or replaced by a bootstrapped target.

\paragraph{TD($n$) targets.}
TD learning replaces the full future return by a shorter observed prefix plus a value estimate at a future state. This is the bootstrapping step: instead of waiting until failure, the learner uses the current approximation of the remaining return after $n$ transitions. For prognostics, this matters because transition data are often available long before an asset fails. TD updates can therefore use partial trajectories and can propagate temporal information backward through the trajectory incrementally.

The $n$-step TD target for vector GVFs is
\begin{equation}
y_t^{(n)}
=
\sum_{\ell=0}^{n-1}
\left(\prod_{j=1}^{\ell}\Gamma_{t+j}\right)u_{t+\ell+1}
+
\left(\prod_{j=1}^{n}\Gamma_{t+j}\right)
\sg\!\left(V_\theta(S_{t+n})\right),
\label{eq:vector_tdn_target}
\end{equation}
where $\sg(\cdot)$ denotes the stop-gradient operator. The stop-gradient notation emphasizes the semi-gradient nature of the update: the bootstrap target is treated as a fixed target for the current optimization step, rather than allowing gradients to flow through both sides of the Bellman equation.

Under the absorbing convention of Definition~\ref{def:absorbing_degradation_process}, both cumulant and continuation vanish once the state enters $\cF$ (each carries the indicator $\one\{S_t\notin\cF\}$ or $\one\{S_{t+1}\notin\cF\}$), so all post-terminal cumulants contribute zero and the continuation product zeroes out the bootstrap after absorption. Consequently, if the trajectory terminates before $t+n$, the product of continuations terminates the bootstrap term, so the TD target automatically reduces to the observed terminal return over the available prefix. If $n=1$, Eq.~\eqref{eq:vector_tdn_target} is the one-step TD target $u_{t+1}+\Gamma_{t+1}V_\theta(S_{t+1})$. As $n$ increases toward the remaining episode length, the target approaches the MC return in Eq.~\eqref{eq:vector_mc_target}. Thus $n$ controls the usual bias--variance and locality--horizon trade-off: small $n$ uses highly local bootstrapping with lower target variance but greater dependence on the current approximation, whereas large $n$ uses more observed future information but becomes closer to high-variance complete-return supervision.


For the decomposed $p_k,q_k$ parameterization (mode probabilities plus mode-specific RUL contributions), the same target equations apply after replacing the stacked vector $u_{t+1}$ with cumulants for the probability components and mode-specific RUL-contribution components. The main difference is interpretive: the $p_k$ components learn terminal hitting probabilities, while the $q_k$ components learn the portion of expected remaining time assigned to trajectories that eventually terminate in mode $k$.


\paragraph{TD($n,\lambda$).}
We use truncated TD($n,\lambda$), a generalization of fixed-step TD($n$), as the primary estimator throughout our main experiments.

\begin{remark}[TD target validity under censoring]
\label{rem:td_censoring_validity}
Fix $t$ and a prefix observed through $t+n$, with absorption time $T$ possibly unobserved. The MC target $y_t^{\mathrm{MC}}$ (Eq.~\eqref{eq:vector_mc_target}) is well-defined from this prefix only if $T \le t+n$; otherwise its components are undefined and must be masked. The TD($n$) target $y_t^{(n)}$ (Eq.~\eqref{eq:vector_tdn_target}) is well-defined whenever $t+n$ lies in the prefix: if $T \le t+n$ it reduces to the terminal return; if $T > t+n$, the bootstrap $\sg(V_\theta(S_{t+n}))$ completes the target from the observed successor state alone.
\end{remark}

Target-construction computability is not arbitrary-censoring consistency: observed prefixes must still represent the transition distribution, or sampling/weighting must address selection.

For $k\in\{1,\ldots,n\}$, let $y_t^{(k)}$ denote the $k$-step vector TD target from Eq.~\eqref{eq:vector_tdn_target} with $n$ replaced by $k$. Define the truncated TD($n,\lambda$) target
\begin{equation}
y_t^{(n,\lambda)}
=(1-\lambda)\sum_{k=1}^{n-1}\lambda^{k-1}y_t^{(k)}+\lambda^{n-1}y_t^{(n)},
\qquad \lambda\in[0,1],
\label{eq:vector_tdnl_target}
\end{equation}
which recovers one-step TD at $\lambda=0$ and TD($n$) at $\lambda=1$. The trace parameter $\lambda$ sets the effective temporal span of the target, emphasizing short bootstraps for small $\lambda$ and longer observed returns for large $\lambda$.

Partition the vector TD$(n,\lambda)$ target into its survival-time and
mode-GVF components,
\[
y_t^{(n,\lambda)}
=
\left(
y_{t,\mathrm{time}}^{(n,\lambda)},
y_{t,\mathrm{mode}}^{(n,\lambda)}
\right).
\]
The two heads regress the corresponding blocks of this common vector
Bellman target:
\begin{align}
\mathcal L_{\mathrm{rul}}(\theta)
&=
\left(
V_\theta^{\mathrm{time}}(S_t)
-
\sg\!\left(y_{t,\mathrm{time}}^{(n,\lambda)}\right)
\right)^2,
\label{eq:weighted_loss}\\
\mathcal L_{\mathrm{cls}}(\theta)
&=
\frac{1}{K}
\left\|
\widehat H_\theta(S_t)
-
\sg\!\left(y_{t,\mathrm{mode}}^{(n,\lambda)}\right)
\right\|_2^2.
\label{eq:class_loss}
\end{align}
The shared encoder and two heads are trained jointly using
\begin{equation}
\mathcal L(\theta)
=
\lambda_{\mathrm{rul}}\mathcal L_{\mathrm{rul}}(\theta)
+
\lambda_{\mathrm{cls}}\mathcal L_{\mathrm{cls}}(\theta).
\label{eq:total_loss}
\end{equation}
All reported experiments use
$\lambda_{\mathrm{rul}}=1$ and
$\lambda_{\mathrm{cls}}=5000$.
An optional cross-entropy branch exists in the implementation but is
disabled in the experiments reported here.

The training loop is summarized in Algorithm~\ref{alg:weighted_tdn}
(Appendix~\ref{app:weighted_tdn}): store transition subsequences, form the vector
TD$(n,\lambda)$ target from
Eqs.~\eqref{eq:vector_tdn_target}--\eqref{eq:vector_tdnl_target}, and
update the shared network by a semi-gradient step on
Equation~\eqref{eq:total_loss}.

\subsection{Implementation Details}
\label{sec:implementation_details}

\subsubsection{Tuning Parameter Selection}
\label{sec:tuning_parameter_selection}

The most important GVF-specific tuning parameter is the continuation $\gamma_{t+1}$. In this paper, $\gamma_{t+1}$ is not only an algorithmic discount factor; it is part of the GVF specification. It multiplies the future value term in the Bellman target and therefore determines how much future cumulants matter. Setting $\gamma_{t+1}=0$ stops accumulation after the next transition, setting $\gamma_{t+1}=1$ continues accumulation until terminal absorption, and using a constant $\gamma_{t+1}=\bar\gamma<1$ creates a geometrically discounted prediction.

For undiscounted cycle-count RUL, the natural continuation is $\gamma_{t+1}^{\mathrm{rul}}=\one\{S_{t+1}\notin\cF\}$, which asks for expected time to absorption. If a practitioner instead wants a finite planning window, choose $\bar\gamma<1$ so that the soft planning scale $(1-\bar\gamma)^{-1}$ matches a desired horizon of $H$ operating cycles (e.g., $\bar\gamma=1-1/H$). This is a \emph{soft} horizon: outcomes beyond $H$ cycles still enter the return, but their contribution decays geometrically. Changing the cumulant itself would define a different GVF; in this paper we keep the survival-time cumulant and control temporal scale through $(\bar\gamma,n,\lambda)$.

The remaining tuning parameters control the estimator rather than the meaning of the GVF. The truncation length $n$ and trace parameter $\lambda$ should be tuned jointly with $\bar\gamma$ and reported together, because they determine the effective temporal scale of the learned prediction.

\subsubsection{Function Approximation}
\label{sec:function_approximation}

Let $V_\theta(s)\in\R^m$ be a parametric vector-GVF approximator. In the multi-task
setting, the network has a shared encoder $f_\theta(s)$ and task-specific heads:
\begin{equation}
\widehat V_\theta(s)
=
\left(
\widehat V_\theta^{\mathrm{time}}(s),
\widehat H_\theta^{(1)}(s),
\ldots,
\widehat H_\theta^{(K)}(s)
\right).
\end{equation}
In all reported experiments the mode head outputs unconstrained scores
$\widehat H_\theta(s)\in\R^K$ and is trained by the mode block of the weighted
vector-GVF MSE loss (Eq.~\eqref{eq:total_loss}), not by cross-entropy.
When the mode components target $\gamma_{\mathrm{mode}}=1$, a $K$-class softmax
readout of $\widehat H_\theta(s)$ may still be used at evaluation to estimate
$\Prob_s(Z=k)$. When the mode components target finite $\gamma_{\mathrm{mode}}<1$,
the raw mode GVFs $H_{\gamma_{\mathrm{mode}}}^{(k)}(s)$ are discounted hitting
values and need not sum to one; a softmax readout should then be interpreted as
estimating normalized urgency-weighted mode scores
(Proposition~\ref{prop:terminal_event_mode_tilting}).

\subsubsection{Predictive State Representation}
\label{sec:state_representation}
\label{sec:windowed_states}
The Bellman recursions are defined with respect to the predictive state $S_t$. In practice, the physical degradation state is unobserved, so $S_t$ must summarize the sensor history relevant to RUL and failure-mode prediction. A suitable representation may be a fixed sensor window, a recurrent hidden state, or another learned history embedding. In the experiments, we use the fixed-length window
\begin{equation}
S_t=[X_{t-h+1},\ldots,X_t],
\end{equation}
where $X_t$ is the multivariate sensor observation and $h$ is the window length. The same state representation is shared by the RUL and failure-mode GVF heads. For C-MAPSS and the simulation study, a one-dimensional convolutional encoder extracts temporal features from this window before the task-specific outputs are computed. A finite window generally makes the Markov property approximate rather than exact: trajectories with identical recent observations may differ in earlier history that remains prognostically relevant. The Bellman equations should therefore be interpreted relative to the chosen predictive state. The GVF formulation is otherwise architecture-agnostic: the distinction from conventional supervised prognostics lies in the Bellman-consistent target construction and terminal semantics, not in the use of a particular neural encoder.

\subsection{Statistical Properties of the Estimator}
\label{sec:algorithmic_properties}

Proofs appear in Appendix~\ref{app:proof_algorithmic_properties}.

\subsubsection{Properties of the Two Prognostic GVF Types}
\label{sec:gvf_readouts}

\begin{proposition}[Soft-horizon bound]
\label{prop:soft_horizon_bound_theory}
For any $s\in\cC$ and $\gamma\in[0,1)$,
$0\le V_\gamma^{\mathrm{time}}(s)\le (1-\gamma)^{-1}$.
More generally, any GVF component with bounded cumulant
$|u_{t+1}^{(i)}|\le\bar c_i$ and continuation
$\gamma_{t+1}^{(i)}\le\bar\gamma_i<1$ satisfies
$\|V_i^\star\|_\infty\le \bar c_i/(1-\bar\gamma_i)$
(Appendix~\ref{app:proof_soft_horizon_bound}).
\end{proposition}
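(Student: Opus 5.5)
The plan is to bound the extensive definition \eqref{eq:scalar_gvf_component} term by term and then pass the bound through the expectation. I will prove the general statement first and read off the survival-time case as a corollary.

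\textbf{Step 1: bound one return.} Fix a component $i$ with $|u^{(i)}_{t+1}|\le\bar c_i$ and $0\le\gamma^{(i)}_{t+1}\le\bar\gamma_i<1$. Along any sample path, the $\ell$-th summand satisfies
\[
\Bigl|\Bigl(\prod_{j=1}^{\ell}\gamma^{(i)}_{t+j}\Bigr)c^{(i)}_{t+\ell+1}\Bigr|\le \bar\gamma_i^{\,\ell}\,\bar c_i .
\]
The random series is therefore dominated pathwise by $\sum_{\ell\ge0}\bar c_i\bar\gamma_i^{\,\ell}=\bar c_i/(1-\bar\gamma_i)$. In particular it converges absolutely on every path, so the return is a well-defined random variable bounded by this constant.

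\textbf{Step 2: pass to the expectation.} Dominated convergence, with the constant dominator from Step 1, lets us interchange expectation and summation. It then gives $|V_i^\star(s)|\le \E_s[\sum_\ell \bar c_i\bar\gamma_i^{\,\ell}]=\bar c_i/(1-\bar\gamma_i)$ for every $s$. Taking the supremum over $s$ yields $\|V_i^\star\|_\infty\le\bar c_i/(1-\bar\gamma_i)$.

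\textbf{Step 3: the survival-time case.} Definition~\ref{def:survival_time_rul_gvf} uses cumulant $\one\{S_t\notin\cF\}\in\{0,1\}$, so $\bar c=1$, and continuation $\gamma\one\{S_{t+1}\notin\cF\}\le\gamma$. The general bound then gives $V^{\mathrm{time}}_\gamma(s)\le(1-\gamma)^{-1}$.

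Nonnegativity also needs an argument, since the general statement only bounds $|V|$. It holds because every summand is a product of nonnegative factors. Alternatively, one can use the closed form \eqref{eq:survival_time_gvf_main}: $0\le\sum_{\ell=0}^{\tau_t-1}\gamma^\ell\le\sum_{\ell\ge0}\gamma^\ell$. This holds pathwise, including on paths where $\tau_t=\infty$.

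\textbf{Main obstacle.} The only delicate point is measure-theoretic. I must justify that the infinite-horizon return is well-defined and that expectation and summation can be interchanged, without assuming almost-sure absorption. The uniform geometric dominator from Step 1 settles this, so I expect no real difficulty. It is also worth noting that no Markov or Bellman fixed-point argument is needed here, because the bound follows directly from the extensive definition.
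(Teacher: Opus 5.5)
Your proof is correct and follows essentially the same route as the paper. Both bound the return pathwise by the geometric series $\sum_{\ell\ge0}\bar c_i\bar\gamma_i^{\ell}=\bar c_i/(1-\bar\gamma_i)$, pass the bound through the expectation by monotonicity, and read off nonnegativity of $V_\gamma^{\mathrm{time}}$ from the nonnegative summands (including on paths with $\tau_t=\infty$). The differences are cosmetic: the paper proves the survival-time case directly and places the general bound in the proof of Theorem~\ref{thm:bellman_td_convergence}(i), and it uses the triangle inequality with monotone convergence where you invoke dominated convergence (with a constant pathwise bound, $|\E_s[G_t^{(i)}]|\le\E_s|G_t^{(i)}|$ already suffices).
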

Thus the discounted survival-time GVF is automatically bounded, with effective planning
horizon approximately $(1-\gamma)^{-1}$. Therefore, $\gamma$ can be selected to match a
desired prediction horizon in the range $(1-\gamma)^{-1}$.

\begin{proposition}[Finite-$\gamma$ mode tilting]
\label{prop:terminal_event_mode_tilting}
Assume absorption occurs almost surely from every $s\in\cC$, i.e.\
$\Prob_s(\tau_t<\infty)=1$. For $s\in\cC$ and $\gamma\in[0,1)$, let $p_k(s)=\Prob_s(Z=k)$
and $a_k(s;\gamma)=\E_s[\gamma^{\tau_t-1}\mid Z=k]$. The mode GVF of
Definition~\ref{def:mode_terminal_event_gvf} factorizes as
\begin{equation}
H_\gamma^{(k)}(s)=p_k(s)a_k(s;\gamma).
\label{eq:mode_tilting_factorization}
\end{equation}
Under almost-sure absorption $\sum_{j=1}^K p_j(s)=1$, so as $\gamma\uparrow1$ we have
$H_\gamma^{(k)}(s)\to p_k(s)$ and the normalized score
$\tilde p_{\gamma,k}(s)=H_\gamma^{(k)}(s)/\sum_{j=1}^K H_\gamma^{(j)}(s)\to p_k(s)$.
Without almost-sure absorption the normalized score instead converges to the
conditional probability $p_k(s)/\sum_{j=1}^K p_j(s)=\Prob_s(Z=k\mid \tau_t<\infty)$.
\end{proposition}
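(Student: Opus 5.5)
The plan is to start from the closed form in Definition~\ref{def:mode_terminal_event_gvf}, $H_\gamma^{(k)}(s)=\E_s[\gamma^{\tau_t-1}\one\{S_{t+\tau_t}\in\cF_k\}]$, and condition on the terminal mode. Because the $\cF_k$ are disjoint, the event $\{S_{t+\tau_t}\in\cF_k\}$ coincides with $\{Z=k\}$ on $\{\tau_t<\infty\}$. Since $\gamma<1$, the integrand vanishes on $\{\tau_t=\infty\}$ (the sum in Eq.~\eqref{eq:mode_gvf_main} has no terminal term there), so no mass is lost regardless of absorption.

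The factorization then follows by writing $\E_s[\gamma^{\tau_t-1}\one\{Z=k\}]=\Prob_s(Z=k)\,\E_s[\gamma^{\tau_t-1}\mid Z=k]$ whenever $p_k(s)>0$, which gives Eq.~\eqref{eq:mode_tilting_factorization}. When $p_k(s)=0$, both sides are zero under any convention for $a_k$. For $s\in\cC$ we have $\tau_t\ge1$, so $\gamma^{\tau_t-1}\in[0,1]$ is well-defined, including at $\gamma=0$.

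For the limit, I would fix $s$ and $k$ with $p_k(s)>0$. On $\{Z=k\}$ the time $\tau_t$ is finite, so $\gamma^{\tau_t-1}\to1$ pointwise as $\gamma\uparrow1$. The integrand is bounded by $1$, so bounded convergence gives $a_k(s;\gamma)\to1$ and therefore $H_\gamma^{(k)}(s)\to p_k(s)$. For the normalized score, I would sum the factorization over $j$ to get $\sum_j H_\gamma^{(j)}(s)\to\sum_j p_j(s)$. This limit equals $\Prob_s(\tau_t<\infty)$, since the modes partition $\cF$. Under almost-sure absorption it equals $1$, so the ratio converges to $p_k(s)$. Without almost-sure absorption, the same computation gives the limit $p_k(s)/\sum_j p_j(s)=\Prob_s(Z=k\mid\tau_t<\infty)$, provided the denominator is positive. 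I would state that proviso explicitly, because if $\Prob_s(\tau_t<\infty)=0$ the score is undefined.

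The main obstacle is the measure-theoretic care at $\{\tau_t=\infty\}$ and the denominators. This means checking that $H_\gamma^{(k)}$ defined through the GVF series with continuation $\gamma\one\{S_{t+1}\notin\cF\}$ indeed equals the stopped form in Eq.~\eqref{eq:mode_gvf_main}. It also means that the ratio limit requires a positive limiting denominator. Everything else is a direct conditioning argument plus bounded convergence.
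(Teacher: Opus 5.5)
Your proposal is correct and follows essentially the same route as the paper's proof. Both arguments identify $\one\{S_{t+\tau_t}\in\cF_k\}$ with $\one\{Z=k\}$ by disjointness (with the integrand set to zero on $\{\tau_t=\infty\}$), factor via conditional expectation (handling $p_k(s)=0$ trivially), use dominated/bounded convergence to get $a_k(s;\gamma)\to1$, and pass to the normalized score under the proviso $\sum_j p_j(s)>0$; your extra check that the GVF series with continuation $\gamma\one\{S_{t+1}\notin\cF\}$ collapses to the stopped form is a small addition that the paper takes as given.
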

Thus the normalized score $\tilde p_{\gamma,k}(s)$ is a reweighted terminal-mode
score: the factor $a_k(s;\gamma)=\E_s[\gamma^{\tau_t-1}\mid Z=k]$ tilts mass toward
modes with shorter expected time to failure, producing a finite-$\gamma$,
urgency-weighted mode score rather than a calibrated hitting probability. In the
$\gamma\uparrow1$ limit it reduces to the terminal-mode probability $p_k(s)$ (to
the conditional probability given absorption when absorption is not almost sure);
for finite $\gamma$, the residual tilt depends on the mode-dependent hitting-time
distribution and is not guaranteed to be negligible for long or heavy-tailed times
to failure.

\subsubsection{Vector Bellman Fixed Point and TD($n,\lambda$) Operator}
\label{sec:tdnl_operator_properties}

Let $\mathcal{T}$ denote the vector Bellman operator in Eq.~\eqref{eq:vector_bellman}.
For $k\ge1$, let $\mathcal{T}^k$ denote the $k$-fold Bellman operator. The population
operator corresponding to the truncated TD($n,\lambda$) target is
\begin{equation}
\mathcal{T}_{n,\lambda}
=
(1-\lambda)\sum_{k=1}^{n-1}\lambda^{k-1}\mathcal{T}^k
+
\lambda^{n-1}\mathcal{T}^n.
\label{eq:tdnl_operator}
\end{equation}

\begin{theorem}[Bellman target and linear TD limit]
\label{thm:bellman_td_convergence}
Under the standard on-policy linear-TD assumptions detailed in
Appendix~\ref{app:proof_algorithmic_properties}, fix a GVF component
$i$ with continuation $\gamma_{t+1}^{(i)}\le\bar\gamma_i<1$.
Write
\[
(\mathcal T_i v)(s)
=
\E\!\bigl[
u_{t+1}^{(i)}+\gamma_{t+1}^{(i)}v(S_{t+1})
\mid S_t=s
\bigr]
\]
for the component Bellman operator induced by
Eq.~\eqref{eq:vector_bellman}, and let
$\mathcal T_{i,n,\lambda}$ denote the corresponding truncated
TD$(n,\lambda)$ operator induced componentwise by
Eq.~\eqref{eq:tdnl_operator}.

\begin{enumerate}[label=(\roman*),leftmargin=*]
\item The operators $\mathcal T_i$ and $\mathcal T_{i,n,\lambda}$ are
contractions---with modulus at most
\[
\rho_{i,n,\lambda}
=
(1-\lambda)\sum_{k=1}^{n-1}\lambda^{k-1}\bar\gamma_i^k
+\lambda^{n-1}\bar\gamma_i^n
<1
\]
for $\mathcal T_{i,n,\lambda}$---and share the same unique fixed point
$V_i^\star$.

\item For a linear approximation $v_{w_i}(s)=\phi(s)^\top w_i$ and
$L_2(\Psi)$ projection $\Pi_\Psi$ onto the feature span, on-policy
semi-gradient TD$(n,\lambda)$ converges almost surely to the unique
projected fixed point
\[
v_i^{\mathrm{TD}}
=
\Pi_\Psi\mathcal T_{i,n,\lambda}v_i^{\mathrm{TD}},
\]
or equivalently $w_{i,t}\to w_i^\star=A_{i,n,\lambda}^{-1}b_{i,n,\lambda}$
\citep{tsitsiklis1997analysis}.

\item Complete-return Monte Carlo regression converges to the
least-squares projection $v_i^{\mathrm{MC}}=\Pi_\Psi V_i^\star$.
Hence, if $V_i^\star$ lies in the feature span (realizability),
\[
v_i^{\mathrm{TD}}=v_i^{\mathrm{MC}}=V_i^\star;
\]
under function-approximation misspecification the two limits may
differ: MC projects $V_i^\star$ directly, whereas TD solves the
projected Bellman equation
\citep{tsitsiklis1997analysis}.
\end{enumerate}

Because $\Gamma_{t+1}$ is diagonal, these conclusions extend
componentwise to the stacked vector GVF.
\end{theorem}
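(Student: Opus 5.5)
The plan is to treat each part separately, working with one fixed component $i$ throughout and using the diagonality of $\Gamma_{t+1}$ at the end to stack the components.

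\emph{Part (i).} I would first show that $\mathcal T_i$ is a sup-norm contraction. For bounded $v,v'$,
\[
|(\mathcal T_i v)(s)-(\mathcal T_i v')(s)|=\bigl|\E[\gamma^{(i)}_{t+1}(v-v')(S_{t+1})\mid S_t=s]\bigr|\le\bar\gamma_i\|v-v'\|_\infty .
\]
Iterating gives $\|\mathcal T_i^k v-\mathcal T_i^k v'\|_\infty\le\bar\gamma_i^k\|v-v'\|_\infty$. The weights $(1-\lambda)\lambda^{k-1}$ for $k<n$ and $\lambda^{n-1}$ are nonnegative and sum to one. The triangle inequality applied to the convex combination in Eq.~\eqref{eq:tdnl_operator} then yields modulus $\rho_{i,n,\lambda}$. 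This is a convex combination of numbers $\bar\gamma_i^k<1$, so $\rho_{i,n,\lambda}<1$.

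Next I would verify that $V_i^\star$ from Eq.~\eqref{eq:scalar_gvf_component} is a fixed point of $\mathcal T_i$. Condition on $S_{t+1}$, use the Markov property and the tower rule, and note that the series converges absolutely by Proposition~\ref{prop:soft_horizon_bound_theory}. Then $\mathcal T_i^kV_i^\star=V_i^\star$ for every $k$, so $\mathcal T_{i,n,\lambda}V_i^\star=V_i^\star$. Banach's theorem on the space of bounded functions gives uniqueness for both operators.

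\emph{Part (ii).} Here I would reduce to \citet{tsitsiklis1997analysis}. The standing assumptions are:
\begin{itemize}
\item an ergodic chain with stationary distribution $\Psi$ for the sampled states (including restarts after absorption, if needed);
\item linearly independent, bounded features;
\item Robbins--Monro step sizes;
\item bounded cumulants.
\end{itemize}
Under these, the expected semi-gradient update is $b_{i,n,\lambda}-A_{i,n,\lambda}w$, with $A_{i,n,\lambda}=\Phi^\top D_\Psi(I-\sum_k\alpha_k P_{\gamma}^k)\Phi$, where $\alpha_k$ are the $(n,\lambda)$ weights and $P_\gamma$ is the $\gamma$-gated transition kernel.

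The key step, and the main obstacle, is showing that $A_{i,n,\lambda}$ is positive definite. I would try the standard argument that $\Pi_\Psi\mathcal T_{i,n,\lambda}$ is a contraction in $\|\cdot\|_\Psi$. This needs
\[
\|P_\gamma v\|_\Psi\le\bar\gamma_i\|v\|_\Psi ,
\]
which follows from Jensen's inequality and stationarity of $\Psi$, giving modulus $\rho_{i,n,\lambda}$ in $L_2(\Psi)$. Then $w^\top A w\ge(1-\rho)\|\Phi w\|_\Psi^2>0$. The ODE/stochastic-approximation theorem of Tsitsiklis--Van Roy then gives almost-sure convergence to $A^{-1}b$, which is exactly the unique projected fixed point. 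The subtlety I would have to handle is the absorbing structure: stationarity requires treating episodes with resets, and $\Psi$ must be the resulting on-policy distribution. I would assume this as part of the stated conditions rather than derive it.

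\emph{Part (iii).} MC regression is least squares on the target $y^{\mathrm{MC}}$, which satisfies $\E[y^{\mathrm{MC}}\mid S_t]=V_i^\star(S_t)$. The population minimizer of $\E_\Psi(\phi^\top w-y^{\mathrm{MC}})^2$ is therefore $\Pi_\Psi V_i^\star$, and SGD convergence follows by the same stochastic-approximation argument with the positive definite matrix $\Phi^\top D_\Psi\Phi$. Under realizability, $V_i^\star=\Pi_\Psi V_i^\star$ and $\Pi_\Psi\mathcal T_{i,n,\lambda}V_i^\star=V_i^\star$. By uniqueness of the projected fixed point, $v_i^{\mathrm{TD}}=V_i^\star=v_i^{\mathrm{MC}}$.

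Finally, since $\Gamma_{t+1}$ is diagonal, each component's operator involves only that component, so all conclusions hold componentwise for the stacked vector GVF.
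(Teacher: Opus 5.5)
Your proposal is correct. For (i) and (iii) it follows the paper's argument: sup-norm contraction of $\mathcal T_i$, the convex-combination bound giving $\rho_{i,n,\lambda}$, the shared fixed point via $\mathcal T_i^kV_i^\star=V_i^\star$ and Banach, MC as the $L_2(\Psi)$ projection of $V_i^\star$, and realizability combined with uniqueness of the projected fixed point.

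The real difference is in (ii). The paper does not prove stability of the projected TD system. It imposes it as Assumption A4 (positive-definite symmetric part of $A_{i,n,\lambda}$), asserts without proof that on-policy sampling guarantees it, and then cites Tsitsiklis--Van Roy. You derive it instead. Jensen plus stationarity of $\Psi$ gives $\|P_\gamma v\|_\Psi\le\bar\gamma_i\|v\|_\Psi$. Hence the linear part $\sum_k\alpha_kP_\gamma^k$ of $\mathcal T_{i,n,\lambda}$ has $L_2(\Psi)$ norm at most $\rho_{i,n,\lambda}$, and Cauchy--Schwarz yields $w^\top A_{i,n,\lambda}w\ge(1-\rho_{i,n,\lambda})\|\Phi w\|_\Psi^2$.

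What your route buys:
\begin{itemize}
\item A4 becomes a consequence of ergodicity and $L_2(\Psi)$-independent features rather than an extra hypothesis.
\item It shows that $\Pi_\Psi\mathcal T_{i,n,\lambda}$ is an $L_2(\Psi)$ contraction with modulus $\rho_{i,n,\lambda}$. This supplies the uniqueness you use in (iii).
\item The same $L_2(\Psi)$ contraction is exactly what the paper's $(1-\rho_{i,n,\lambda}^2)^{-1/2}$ misspecification bound needs, whereas claim (i) only establishes contraction in sup norm.
\end{itemize}

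What it costs: $\Psi$ must be stationary for the same (reset) kernel whose $\gamma$-gated version defines $\mathcal T_{i,n,\lambda}$. The continuation must also vanish across resets, so that this operator agrees with the absorbing one. You rightly flag this point but assume it rather than prove it. The paper's route is shorter and would also cover sampling distributions where A4 happens to hold, but its on-policy justification is left unproved.

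Smaller differences:
\begin{itemize}
\item You explicitly check that the return-defined $V_i^\star$ satisfies the Bellman equation. The paper takes $V_i^\star$ as the Banach fixed point and only bounds the return expectation.
\item You add a stochastic-approximation argument for MC SGD with the positive-definite matrix $\Phi^\top D_\Psi\Phi$.
\item You omit the paper's brief remark on why the TD and MC limits differ without realizability.
\end{itemize}
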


The assumptions, contraction modulus, projected normal equations, and
proofs are given in Appendix~\ref{app:proof_algorithmic_properties}.

\subsubsection{MC versus TD Target Variance}
\label{sec:mc_td_target_variance}

MC uses the complete realized return to absorption (Eq.~\eqref{eq:vector_mc_target}); it does \emph{not} bootstrap.
TD($n$) uses the same first $n$ observed cumulants, then replaces the unobserved remainder
by a successor-state prediction (Eq.~\eqref{eq:vector_tdn_target}).
In censored or partially observed prognostics, the full MC return is often unavailable
even when local transitions are observed.
The following proposition isolates the variance effect of replacing the residual MC
return by an oracle $n$-step TD bootstrap $V^\star$.

\begin{proposition}[MC--TD target-variance decomposition]
\label{prop:mc_td_target_variance}
Fix a scalar GVF with continuation $\gamma\in[0,1)$, fix $n\ge1$, and condition throughout on
the event $\{S_t=s,\ \tau_t>n\}$ (i.e., absorption has not yet occurred by step $t+n$; this
event has positive probability whenever $\Prob_s(\tau_t>n)>0$).
With prefix $P_{t,n}=\sum_{\ell=0}^{n-1}\gamma^{\ell}c_{t+\ell+1}$ and residual $R_{t+n}=\sum_{\ell=0}^{\tau_{t+n}-1}\gamma^{\ell}c_{t+n+\ell+1}$,
\begin{equation}
G_t^{\mathrm{MC}}=\sum_{\ell=0}^{\tau_t-1}\gamma^{\ell}c_{t+\ell+1}=P_{t,n}+\gamma^{n}R_{t+n}
\quad\text{on }\{\tau_t>n\}.
\label{eq:mc_full_return}
\end{equation}
Under Markovity of $S_t$, the oracle TD target $G_t^{\mathrm{TD},\star}=P_{t,n}+\gamma^{n}V^\star(S_{t+n})$ with $V^\star(S_{t+n})=\E[R_{t+n}\mid S_{t+n}]$ equals the conditional expectation of $G_t^{\mathrm{MC}}$ given the observed history through $S_{t+n}$, and
\begin{equation}
\Var(G_t^{\mathrm{MC}}\mid S_t=s,\,\tau_t>n)
=
\Var(G_t^{\mathrm{TD},\star}\mid S_t=s,\,\tau_t>n)
+
\gamma^{2n}\E[\Var(R_{t+n}\mid S_{t+n})\mid S_t=s,\,\tau_t>n].
\label{eq:mc_td_variance_decomp}
\end{equation}
Thus, on the event that absorption has not yet occurred by step $t+n$, oracle TD has no
larger conditional variance than complete MC.
\end{proposition}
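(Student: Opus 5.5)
The plan is to prove the three claims of Proposition~\ref{prop:mc_td_target_variance} in turn: the pathwise split \eqref{eq:mc_full_return}, the identification of the oracle TD target as a conditional expectation, and then the law of total variance, which yields \eqref{eq:mc_td_variance_decomp} directly.

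\emph{Step 1 (pathwise split).} On $\{\tau_t>n\}$, absorption occurs after time $t+n$, so $\tau_t=n+\tau_{t+n}$. Split the sum defining $G_t^{\mathrm{MC}}$ at index $\ell=n$. The first $n$ terms form $P_{t,n}$. Reindexing the remaining terms by $\ell'=\ell-n$ gives $\gamma^n\sum_{\ell'=0}^{\tau_{t+n}-1}\gamma^{\ell'}c_{t+n+\ell'+1}=\gamma^nR_{t+n}$. Since $\gamma<1$ and the cumulants are bounded, all returns are integrable, so the sums are well defined even without almost-sure absorption (Proposition~\ref{prop:soft_horizon_bound_theory}).

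\emph{Step 2 (conditional expectation).} Let $\mathcal G_{t+n}=\sigma(S_t,\dots,S_{t+n})$. The prefix $P_{t,n}$ and the indicator of $\{\tau_t>n\}$ are $\mathcal G_{t+n}$-measurable. For each $\ell$, the cumulant $c_{t+\ell+1}$ is a deterministic function of $(S_{t+\ell},S_{t+\ell+1})$, so $P_{t,n}$ uses states only up to $S_{t+n}$. By the Markov property, the conditional law of $(S_{t+n},S_{t+n+1},\dots)$ given $\mathcal G_{t+n}$ depends only on $S_{t+n}$. Hence $\E[R_{t+n}\mid\mathcal G_{t+n}]=\E[R_{t+n}\mid S_{t+n}]=V^\star(S_{t+n})$. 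Here $V^\star(S_{t+n})$ is the GVF evaluated at $S_{t+n}\in\cC$, which holds on $\{\tau_t>n\}$. It follows that, on this event, $\E[G_t^{\mathrm{MC}}\mid\mathcal G_{t+n}]=P_{t,n}+\gamma^nV^\star(S_{t+n})=G_t^{\mathrm{TD},\star}$.

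\emph{Step 3 (variance decomposition).} Work under the probability measure $\Prob(\cdot\mid S_t=s,\tau_t>n)$, which is well defined because $\Prob_s(\tau_t>n)>0$. Apply the law of total variance with respect to $\mathcal G_{t+n}$:
\[
\Var(G^{\mathrm{MC}}_t)=\Var\bigl(\E[G^{\mathrm{MC}}_t\mid\mathcal G_{t+n}]\bigr)+\E\bigl[\Var(G^{\mathrm{MC}}_t\mid\mathcal G_{t+n})\bigr].
\]
By Step 2, the first term equals $\Var(G_t^{\mathrm{TD},\star})$. For the second term, $P_{t,n}$ is $\mathcal G_{t+n}$-measurable, so $\Var(G^{\mathrm{MC}}_t\mid\mathcal G_{t+n})=\gamma^{2n}\Var(R_{t+n}\mid\mathcal G_{t+n})$. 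By the Markov property this equals $\gamma^{2n}\Var(R_{t+n}\mid S_{t+n})$. Substituting both terms gives \eqref{eq:mc_td_variance_decomp}. The final claim follows because the second term is nonnegative.

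\emph{Main obstacle.} The delicate point is the conditioning on the event $\{S_t=s,\tau_t>n\}$. Conditioning on this event must not distort the Markov step. This holds because $\{\tau_t>n\}\in\mathcal G_{t+n}$: conditioning on an event that is $\mathcal G_{t+n}$-measurable leaves the conditional law of the future given $\mathcal G_{t+n}$ unchanged. I would verify this explicitly via the tower property, using $\E[X\mathbf 1_A]$ with $A\in\mathcal G_{t+n}$. If $\cS$ is continuous, I would also note that $\{S_t=s\}$ should be read through a regular conditional distribution, that is, as the process started at $s$ under $\Prob_s$.
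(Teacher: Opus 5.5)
Your proposal is correct and follows essentially the same route as the paper: split the return pathwise on $\{\tau_t>n\}$, identify $G_t^{\mathrm{TD},\star}$ as $\E[G_t^{\mathrm{MC}}\mid\mathcal H_{t,n}]$ via the Markov property, and apply the law of total variance with $\Var(G_t^{\mathrm{MC}}\mid\mathcal H_{t,n})=\gamma^{2n}\Var(R_{t+n}\mid S_{t+n})$. Your remark that $\{\tau_t>n\}$ is measurable with respect to the history $\sigma$-field, so conditioning on it leaves the Markov step intact, makes explicit a point the paper only uses implicitly.
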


Both targets share $P_{t,n}$; MC keeps the realized residual $R_{t+n}$, while oracle TD replaces it by $\E[R_{t+n}\mid S_{t+n}]$. With a learned bootstrap $\widehat V$, TD helps when successor prediction is more accurate than sampling the full residual.

\section{Simulation Study}
\label{sec:sim_study}

The central hypothesis is that Bellman-consistent targets exploit transition-level
information and remain advantageous as terminal labels become scarce.

\subsection{Data Generation}
\label{sec:exp_sim_dgp}

Each unit follows a two-stage degradation process with a shared early phase and a
mode-specific late phase. An anomaly trigger ends the early phase and starts the
late-phase clock; monitoring for prognostics begins only after that trigger, so
all RUL and failure-mode targets are defined on the post-trigger trajectory.
The two modes differ in both the late-phase degradation kernel and a post-trigger
discriminative sensor pattern, as specified next.

\begin{align}
&\quad g_A(t)=\gamma^A_0+\gamma^A_1(e^t\!-\!1)+\gamma^A_2 t^3,
  \quad \boldsymbol{\gamma}^A_i\sim\mathcal{N}(\boldsymbol{\mu}_A,\boldsymbol{\Sigma}_A),
  \quad \boldsymbol{\mu}_A=(0,0.5,1.5)^\top, \notag\\
&\quad t_{\mathrm{act},i}=\min\{t:g_A(t)\ge\theta_i\},
  \quad \theta_i\sim\mathrm{U}(0.4,0.6),
  \quad s=t-t_{\mathrm{act},i}, \notag\\
&\quad g_B(s)=\begin{cases}
    \gamma^B_0+\gamma^B_1 s+\gamma^B_2 s^2, \ \ \boldsymbol{\gamma}^B\sim t_5(\boldsymbol{\mu}_B,\boldsymbol{\Sigma}_B) & \text{(Mode~1)}\\
    \gamma^B_0+\gamma^B_1(e^s\!-\!1)+\gamma^B_2 s^3, \ \ \boldsymbol{\gamma}^B\sim\mathcal{N}(\boldsymbol{\mu}_A,\boldsymbol{\Sigma}_A) & \text{(Mode~2)}
  \end{cases},
  \quad \gamma^B_1,\gamma^B_2>0, \label{eq:dgp}\\[4pt]
&\quad \boldsymbol{x}(t)=\boldsymbol{x}_A(t,g_A)+\rho_B\,\boldsymbol{x}_B(s,g_B)+\boldsymbol{p}(s),
  \quad \rho_B=1.8,
  \quad \boldsymbol{\omega}^\top\!\boldsymbol{x}_{\cdot}=g_{\cdot},
  \quad \boldsymbol{\omega}=(0.6,0.2,-0.5)^\top, \notag\\
&\quad \begin{pmatrix}x_1\\x_2\\x_4\end{pmatrix}
  =\begin{pmatrix}a_{11}t^2-a_{12}\sin(25t)\\
                  a_{21}t+a_{22}\sin(50t)\\
                  a_{41}t+a_{42}\end{pmatrix}+\boldsymbol{\varepsilon},
  \quad x_3=(g-\omega_1 x_1-\omega_2 x_2)/\omega_3,
  \quad \varepsilon_j\sim\mathcal{N}(0,0.05^2), \notag\\
&\quad p_0(u)=\begin{cases}
    \alpha\, u^{1.5} & \text{Mode~1}\\
    \alpha\sin(n\cdot 2\pi u)\sqrt{u}+\alpha\cdot 0.15\,u & \text{Mode~2}
  \end{cases},
  \quad \boldsymbol{p}(s)=p_0(u)\,(1,0,0,0)^\top,
  \quad \alpha\sim\mathrm{U}(0.10,0.25),\quad n\sim\mathrm{U}(15,25),
  \quad u=s/s_{\mathrm{fail}}, \notag
\end{align}

\noindent where failure occurs when $g_B(s)\ge 5$ and $\boldsymbol{\mu}_B=(0.2,1.0,2.0)^\top$.
The pre-trigger channels also receive mode-independent sinusoidal texture
(two-frequency oscillation plus linear drift on $x_1,x_2$), ensuring pre-trigger
sensor distributions are statistically identical across modes.
Beyond the mode-specific late-phase kernel $g_B$, the enrichment
$\boldsymbol{p}(s)$ is the sole additional mode-specific sensor pattern:
it loads $p_0(u)$ onto channel $x_1$ only and vanishes at the trigger ($s=0$);
its amplitude grows through Phase~B (monotonically for Mode~1, and as a
$\sqrt{u}$ envelope modulating a sinusoid for Mode~2, so only the envelope grows)
--- so macro-F1 starts near chance and rises toward end of life, consistent with
Table~\ref{tab:sim_main}.

Trajectories are sampled at $\Delta t=0.005$ with $\mathrm{max\_time\_B}=10.0$,
yielding ${\approx}300$ post-trigger steps per unit.
We generate $300$ historical units per mode ($600$ pooled), split $70/30$ into training
and validation, and $300$ in-service test units per mode ($600$ pooled), with a fixed
seed shared by all methods.  The RUL target at step $t$ is
$r_t = \lfloor(\tau_i - t)/\Delta t\rfloor$ remaining steps, post-trigger only.
The mode label is the integer $\ell_i \in \{0,1\}$. Figure~\ref{fig:et_long_dataset_viz} shows representative trajectories for both modes.

\begin{figure}[htbp]
  \centering
  \includegraphics[width=\linewidth]{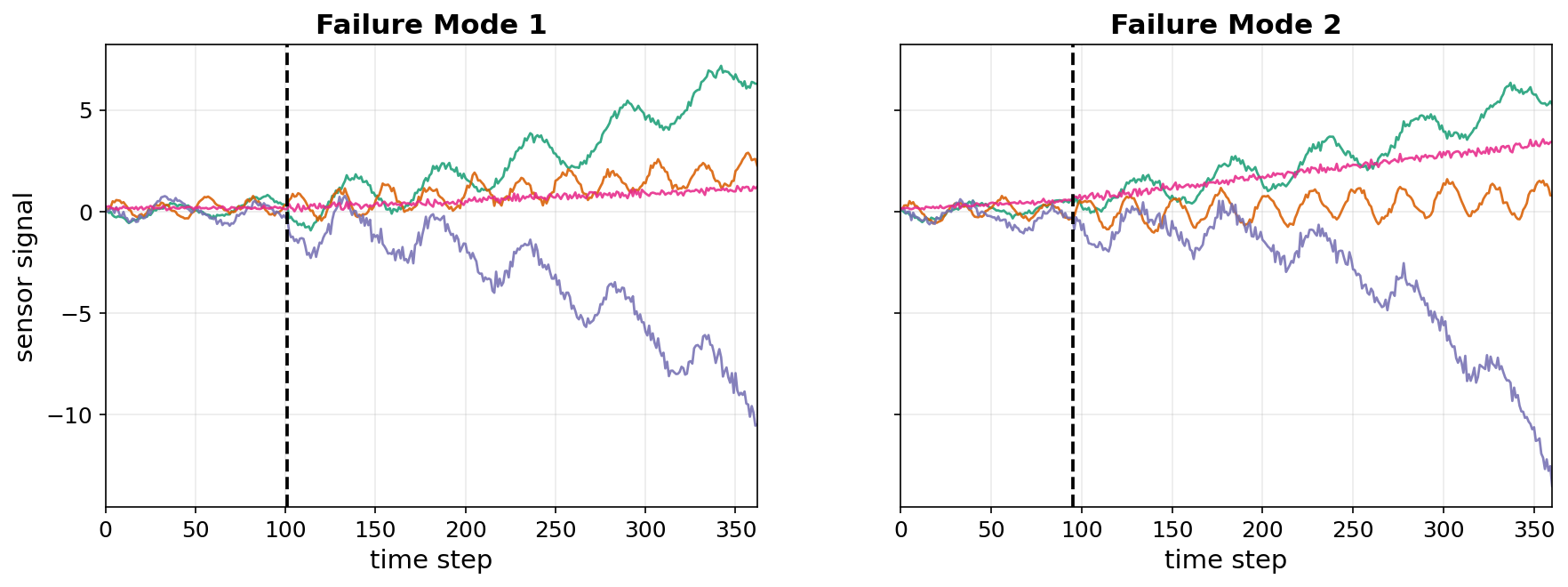}
  \caption{Simulation: example trajectories for both modes. Modes differ in both
    the late-phase kernel $g_B$ and the additional $x_1$ pattern $p_0$; the latter
    vanishes at the trigger, so macro-F1 starts near chance and rises toward failure.}
  \label{fig:et_long_dataset_viz}
\end{figure}

\subsection{Evaluation Protocol}
\label{sec:exp_protocol}

All reported results use the same train/validation/test partitions within each platform. The full-data experiments compare methods when every training trajectory is observed to failure; the label-scarce experiments remove terminal supervision from a fraction of training segments.

\paragraph{Benchmark Methods}
To ensure a fair comparison, TD$(n,\lambda)$, the supervised same-backbone Monte Carlo (MC) control, and the Health Index (HI) baseline all use \emph{the same one-dimensional convolutional sliding-window encoder}. For MC, we jointly trained a standard regression head for RUL prediction and classification head for failure-mode prediction using the same CNN1D backbone. For HI, we trained a scalar health-index model using the same CNN1D backbone and a learned health-index output and a terminal-threshold variance penalty to encourage monotonicity; implementation details are given in Appendix~\ref{app:hi_baseline}.

Replacing the shared CNN1D with a single-step DNN degraded TD, so all reported models use the windowed CNN1D backbone. Inputs are $(W \times F)$ sliding windows with $F{=}4$, permuted channel-first. The encoder uses Conv1D channels $F \rightarrow 256 \rightarrow 96 \rightarrow 32$, kernel size $7$, ReLU, global average pooling, and FC layers $32 \rightarrow 64 \rightarrow 128$. The trunk branches to an RUL head (FC $128 \rightarrow 1$) and a failure-mode head (FC $128 \rightarrow C$, $C{=}2$). GVF estimators train both heads jointly by the weighted vector-GVF MSE loss in Eq.~\eqref{eq:total_loss} with $n{=}5$, $\bar\gamma{=}0.995$ for the RUL continuation, mode continuation $\gamma_{\mathrm{mode}}{=}1$ (undiscounted terminal-event targets), $\lambda{=}1.0$ on full trajectories ($0.7$ for stitch), $\lambda_{\mathrm{rul}}{=}1$, $\lambda_{\mathrm{cls}}{=}5000$, $30$ epochs, and validation-based model selection. MC regresses true cycle-count RUL and terminal failure-mode labels with \emph{the same CNN1D backbone and joint classification head}; it is a supervised same-backbone control, not a survey of conventional RUL methods.

\paragraph{Evaluation Metrics}
The TD RUL head learns $V_\gamma^{\mathrm{time}}$ at $\bar\gamma{=}0.995$ (Section~\ref{sec:gvf_readouts}). We convert raw output $\hat v$ to cycle-count RUL by $\hat r = \log\bigl(1-(1-\bar\gamma)\hat v\bigr)/\log\bar\gamma$, the deterministic-hitting-time inversion of $\hat v=(1-\bar\gamma^{\hat r})/(1-\bar\gamma)$. Under stochastic hitting times, this is a certainty-equivalent readout, not generally $\E[T\mid s]$. We clip $1-(1-\bar\gamma)\hat v$ to $[10^{-12},1.0]$ and the final RUL to be nonnegative. MC regresses cycle-count RUL directly.
Rather than scoring a single end-of-observation prediction per unit, we score every post-trigger step: for each test unit the model predicts RUL and failure mode at all time steps from the first full window to failure, yielding approximately $144{,}000$ predictions over the $600$ test units.
For RUL we report the \emph{normalized RUL error} (RUL NAE), which divides each per-step absolute error by the unit's post-trigger lifetime $T_B^{(i)}$, where $\hat{r}_{it}$ and $r_{it}$ are the predicted and true RUL of unit $i$ at post-trigger step $t$:
\begin{equation}
\mathrm{NAE}_{it} = \frac{\bigl|\hat{r}_{it}-r_{it}\bigr|}{T_B^{(i)}};
\label{eq:metric_nae}
\end{equation}
this per-step quantity is averaged within each unit--bucket cell to form a
fleet-level bucket mean for a given training run; Table~\ref{tab:sim_main}
then reports mean~$\pm$~SEM of those fleet means across three training seeds.
Buckets partition the remaining-life percentage $100r_{it}/T_B^{(i)}$ into five equal intervals.
The same by-bucket NAE construction (without the across-seed aggregation stated in
each caption) underlies Figure~\ref{fig:cmapss_rul_boxplot}.
Table~\ref{tab:cmapss_fulltraj} instead reports the per-test-unit overall mean of
$\mathrm{NAE}_{it}$---i.e.\ $\mathrm{NAE}_i$ averaged within each engine over all its
observed steps---as mean~$\pm$~SEM over test engines (RUL columns); endpoint F1 on
FD003/FD004 uses mean~$\pm$~SEM across training seeds (see that table's caption).
For failure-mode prediction we report accuracy and macro-F1.

Unless a caption states otherwise, results are reported under the following seed protocol: full-trajectory simulation results use $3$ training seeds; reported simulation stitch results use $\lambda{=}0.7$.

\paragraph{Full-trajectory and stitch experimental protocols.}

Let a full trajectory be
\[
T_i=(X_{i,0},\ldots,X_{i,T_i})
\]
with unit identity $i$.
In the \emph{full-trajectory} setting, each training unit is observed run-to-failure. The training pipeline retains unit identity, absolute cycle index, complete trajectory length, remaining life, and terminal mode for target construction. The network input itself remains the corresponding sensor window. This standard supervised setup supports both complete-return MC targets $G_t^{\mathrm{MC}}$ and bootstrapped TD targets.

In the \emph{label-scarce stitch} setting, training draws anonymous fixed-length segments from the full-trajectory pool. A segment is $S_j = (X_{a_j:b_j}, X_{a_j+1:b_j+1}, \ldots, X_{a_j+W-1:b_j+W-1})$ of length $W+1$, sampled uniformly from any $(i,t)$. The protocol withholds unit identity and complete-return labels from nonterminal segments, preserving only adjacent-window pairs $(S_t, S_{t+1})$, immediate cumulants $u_{t+1}$, and continuations $\Gamma_{t+1}$ (terminal if observed, otherwise capped at $W+1$). Models do not observe $i$, absolute cycle time $t$, $T_i$, $\tau_t$, or $Z_i$ unless the segment reaches failure within $W+1$ steps. Complete-return MC targets are unavailable for nonterminal segments; bootstrapped TD targets remain defined. The protocol models fragmented records, not arbitrary informative censoring.

\subsection{Results}
\label{sec:exp_sim_results}

We stress-test data efficiency by training on available fractions $f\in\{0.2,0.5,0.8\}$ of the historical segment pool.

\subsubsection{Full-Trajectory Setting}
\label{sec:exp_sim_fulltraj}

Table~\ref{tab:sim_main} summarizes full-trajectory joint results
(mean~$\pm$~SEM across three training seeds; see caption).

\begin{table}[htbp]
\centering
\caption{Simulation: RUL NAE ($\downarrow$) and macro-F1 ($\uparrow$) by remaining-life bucket.
For each training seed, one fleet-level mean is computed per cell on the fixed test cohort;
cells report mean~$\pm$~SEM across $N_{\rm seeds}{=}3$ seeds
($\mathrm{SEM}=s/\sqrt{N_{\rm seeds}}$).
\textbf{Bold} = best mean; TD uses $\lambda{=}1.0$; HI has no classification head.
Bucket labels are post-trigger remaining-life percentages
$100r_{it}/T_B^{(i)}$ (Section~\ref{sec:exp_protocol}).
TD denotes TD$(n,\lambda$).}
\label{tab:sim_main}
\setlength{\tabcolsep}{3.5pt}
\begin{tabular}{@{}lccc@{}}
\toprule
\multicolumn{4}{@{}c@{}}{\emph{RUL NAE $\downarrow$}} \\
Bucket & TD & MC & HI \\
\midrule
80--100 & $\mathbf{0.107\pm0.004}$ & $0.113\pm0.003$ & $0.223\pm0.061$ \\
60--80  & $\mathbf{0.090\pm0.003}$ & $0.100\pm0.003$ & $0.188\pm0.039$ \\
40--60  & $\mathbf{0.069\pm0.001}$ & $0.080\pm0.006$ & $0.149\pm0.022$ \\
20--40  & $\mathbf{0.043\pm0.002}$ & $0.055\pm0.004$ & $0.106\pm0.011$ \\
0--20   & $\mathbf{0.024\pm0.001}$ & $0.033\pm0.002$ & $0.068\pm0.002$ \\
Overall & $\mathbf{0.0615\pm0.0003}$ & $0.072\pm0.003$ & $0.137\pm0.022$ \\
\bottomrule
\end{tabular}
\hfill
\begin{tabular}{@{}lcc@{}}
\toprule
\multicolumn{3}{@{}c@{}}{\emph{Macro-F1 $\uparrow$}} \\
Bucket & TD & MC \\
\midrule
80--100 & $\mathbf{0.699\pm0.006}$ & $0.596\pm0.061$ \\
60--80  & $\mathbf{0.822\pm0.016}$ & $0.728\pm0.040$ \\
40--60  & $\mathbf{0.836\pm0.003}$ & $0.794\pm0.022$ \\
20--40  & $\mathbf{0.889\pm0.007}$ & $0.850\pm0.025$ \\
0--20   & $\mathbf{0.933\pm0.012}$ & $0.887\pm0.024$ \\
Overall & $\mathbf{0.852\pm0.008}$ & $0.792\pm0.031$ \\
\bottomrule
\end{tabular}
\end{table}

Under complete run-to-failure labels, TD and MC have comparable overall RUL accuracy, so GVF prediction loses no accuracy relative to the supervised same-backbone control. TD's advantage is strongest near failure (0--20\% bucket) and TD leads MC in macro-F1 at every life stage, with the largest gap early in life. HI trails both GVF methods and is more seed-sensitive.
\subsubsection{Label-Scarce Stitch Setting}
\label{sec:exp_et_stitch}

Figure~\ref{fig:et_stitch_frac_boxplot} shows RUL NAE at $f\in\{0.2, 0.5, 0.8\}$. Classification is omitted: TD and MC are near chance at these fractions, and Table~\ref{tab:sim_main} reports full-trajectory classification only.

\begin{figure}[htbp]
  \centering
  \includegraphics[width=\linewidth]{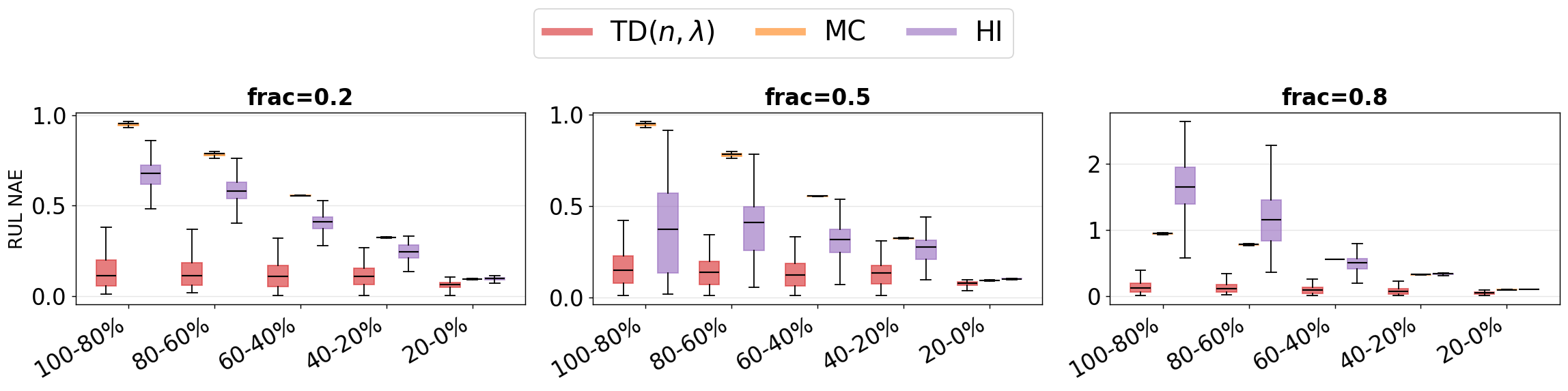}
  \caption{Simulation: label-scarce stitch RUL NAE ($\downarrow$) at fractions $f\in\{0.2, 0.5, 0.8\}$ for TD$(n,\lambda)$ ($\lambda{=}0.7$), MC, and HI (Section~\ref{sec:exp_et_stitch}).}
  \label{fig:et_stitch_frac_boxplot}
\end{figure}

Under the stitch protocol of Section~\ref{sec:exp_protocol}, TD remains competitive across fractions, while MC and HI underperform: MC is limited to terminal-containing segments, and HI's first-passage calibration is compromised under anonymity (stitch RUL error non-monotonic across remaining life, worst at mid-life).

\subsection{Ablation Studies}
\label{sec:exp_sim_lambda}

Figure~\ref{fig:td_mc_joint_paper} shows how the trace parameter $\lambda$ shapes the
temporal coherence of predicted RUL and failure-mode-risk trajectories.  Small $\lambda$
yields local bootstrapping (TD(1)-like targets), producing noisier per-step predictions;
large $\lambda$ blends in longer observed returns and produces smoother trajectories.
The figure uses $\lambda{=}0.5$ to maximize the visual contrast between TD and MC; the
main table uses the canonical $\lambda{=}1.0$ (full-trajectory) and $\lambda{=}0.7$
(stitch), which were selected on validation data.

\begin{figure}[htbp]
  \centering
  \includegraphics[width=0.8\linewidth]{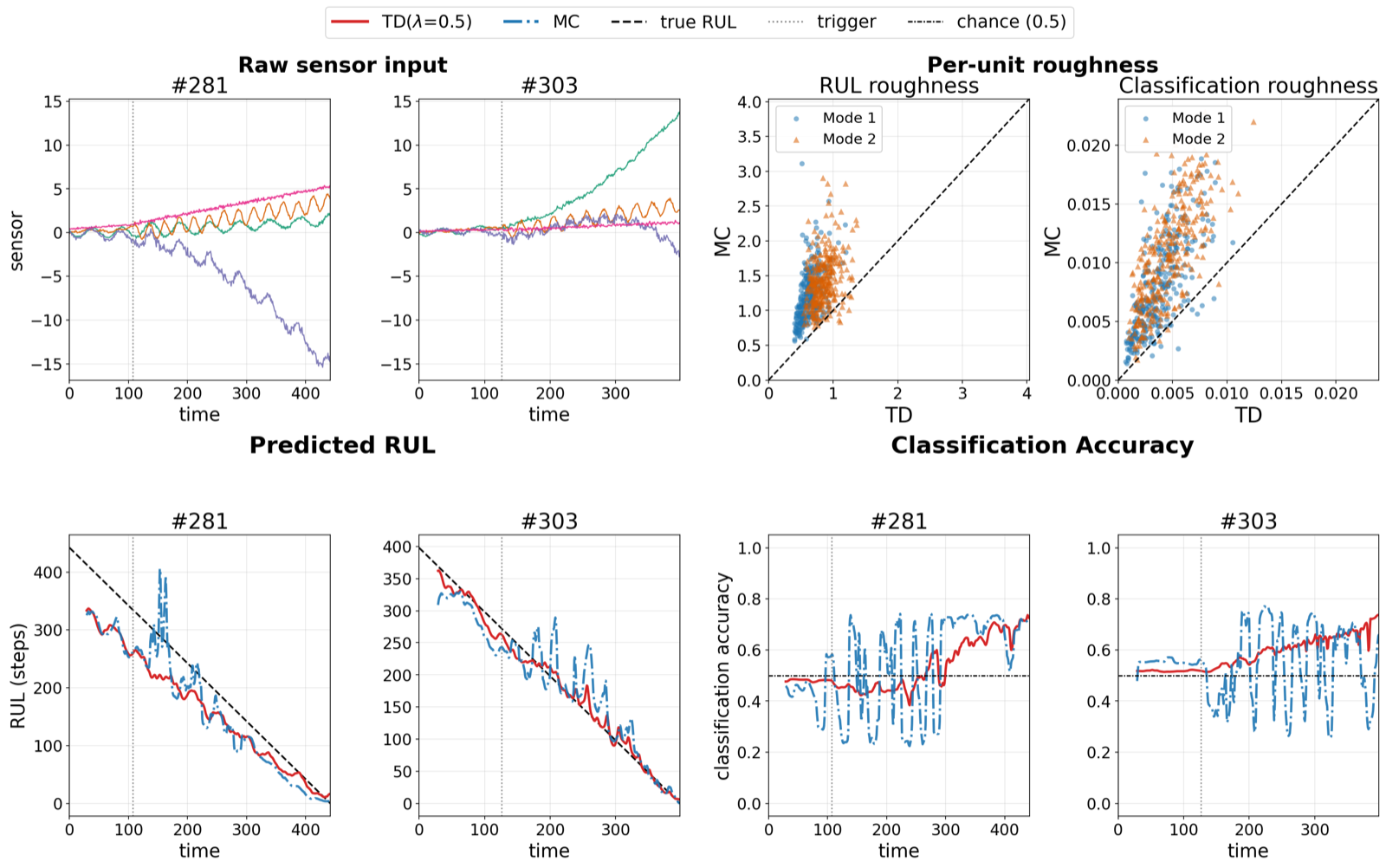}
  \caption{Simulation: $\lambda$ effect on prediction smoothness ($\lambda{=}0.5$
    shown for contrast; canonical full-trajectory model uses $\lambda{=}1.0$).
    TD$(n,\lambda)$ gives smoother RUL and failure-risk trajectories than MC.}
  \label{fig:td_mc_joint_paper}
\end{figure}

\section{Case Study: C-MAPSS}
\label{sec:exp_cmapss}

The simulation study isolates Bellman consistency under known dynamics; C-MAPSS tests transfer when the dynamics are unknown and Section~\ref{sec:problem_formulation}'s assumptions hold only approximately. The NASA C-MAPSS turbofan benchmark \citep{saxena2008damage} provides physics-simulated multivariate trajectories across four subsets varying operating-condition heterogeneity and competing failure modes, with run-to-failure labels for evaluation. FD003/FD004 failure-mode labels are not native to C-MAPSS; following \citet{fu2025degradation}, we derive reference pseudo-labels by DTW $k$-means ($k{=}2$) on held-out sensor-trajectory embeddings, with no RUL or endpoint information. These labels score failure-mode F1 and provide MC's supervised mode target; TD receives only the supervision specified for each protocol. C-MAPSS therefore localizes performance differences to known difficulty axes.

\subsection{Dataset Overview}
\label{sec:exp_cmapss_dataset}

C-MAPSS simulates turbofan engine degradation under user-specified operating conditions and
fault modes, producing multivariate sensor trajectories from a nominal state to failure.
Each of the four subsets isolates a different combination of two difficulty axes:
FD001 provides a single-condition/single-fault sanity check; FD002 adds operating-condition
heterogeneity; FD003 adds a second, competing failure mode under the same single condition;
FD004 combines both operating-condition heterogeneity and dual failure modes, and is
therefore the most difficult subset in the suite.
Train/test sizes, operating conditions, and fault modes are FD001 $100/100$, $1$, $1$;
FD002 $260/259$, $6$, $1$; FD003 $100/100$, $1$, $2$; and FD004 $249/248$, $6$, $2$.

\subsection{Data Preprocessing}
\label{sec:exp_cmapss_preproc}

Training units are run to failure; test units are truncated before failure with residual
life withheld until evaluation.
We retain 15 informative sensors (dropping constant/near-constant channels and raw
operating-setting columns), MinMax-scale by operating condition on FD002/FD004 and
globally on FD001/FD003 (training statistics only), and form sliding windows of length
$W{=}30$ (FD001/FD003/FD004) or $W{=}20$ (FD002; shortened following
\citet{Li2018remaining}).
Prefixes shorter than $W$ are left-zero-padded.
Unless noted otherwise, C-MAPSS GVF runs use the same CNN1D encoder as
Section~\ref{sec:exp_protocol}, with $n{=}4$ (stitch padding $n{-}1$), shared
soft-horizon continuation $\bar\gamma{=}0.995$ for both the RUL and mode heads
(so $\gamma_{\mathrm{mode}}{=}0.995$, unlike the simulation's undiscounted
$\gamma_{\mathrm{mode}}{=}1$), $60$ training epochs ($80$ for stitch), and
weighted vector-GVF MSE loss weights $\lambda_{\mathrm{rul}}{=}1$,
$\lambda_{\mathrm{cls}}{=}5000$ on the multi-mode subsets FD003/FD004
(FD001/FD002 are trained RUL-only).
Each training run uses a \emph{single} trace parameter $\lambda$ for the shared
vector TD$(n,\lambda)$ target of both heads
(Eqs.~\eqref{eq:vector_tdnl_target}--\eqref{eq:total_loss}).
Full-trajectory TD is trained as a $\lambda$ sweep of separately trained joint
models; stitch $\lambda$ is chosen by the cross-configuration rule below.
Within each run, we use best-validation-epoch weights on held-out training
trajectories; test fleets stay disjoint.
Across hyperparameter configurations, the reported configuration minimizes
test-endpoint RMSE, following C-MAPSS practice because no disjoint split exists
for that comparison.
Endpoint RMSE selection need not optimize reported NAE, but the rule is uniform
across methods and subsets.
RUL is evaluated at every post-window step under the NAE protocol of
Section~\ref{sec:exp_protocol}; all methods share this pipeline.

\subsection{Full-Trajectory Results}
\label{sec:exp_cmapss_results}
\label{sec:exp_cmapss_fulltraj}

Table~\ref{tab:cmapss_fulltraj} and Figure~\ref{fig:cmapss_rul_boxplot} summarize
full-trajectory results across all four C-MAPSS subsets.
RUL NAE is the per-test-unit mean of $|\hat{r}-r|/T_B$, reported as
mean~$\pm$~SEM over test engines; for FD003/FD004, the rightmost columns give endpoint minority-class F1 (mean~$\pm$~SEM across three training seeds).
Reported full-trajectory TD numbers use the two $\lambda$ settings from that
sweep of separately trained joint models.
RUL uses the best-endpoint-RMSE configuration ($\lambda{=}1.0$).
Failure-mode classification on FD003/FD004 uses a separately trained
$\lambda{=}0$ model (best endpoint RMSE among $\lambda{=}0$ runs), because
$\lambda{=}1.0$ harms the shared representation's mode head under this
benchmark.
By-bucket classification accuracy is deferred to Appendix~\ref{app:cmapss_cls_bucket}.
\textbf{Bold} marks the best RUL NAE mean; FD003/FD004 F1 are unbolded (within noise of each other).

\begin{table}[htbp]
\centering
\caption{C-MAPSS full-trajectory RUL NAE ($\downarrow$; Section~\ref{sec:exp_protocol})
and endpoint minority-class F1 on FD003/FD004 ($\uparrow$).
RUL NAE: mean~$\pm$~SEM over test engines (one per-engine mean NAE, then SEM across engines).
F1: mean~$\pm$~SEM across $3$ training seeds (one endpoint F1 per seed).
TD denotes TD$(n,\lambda$). \textbf{Bold} = best RUL NAE mean per dataset.}
\label{tab:cmapss_fulltraj}
\begin{tabular}{@{}lccccc@{}}
\toprule
& \multicolumn{3}{c}{RUL NAE $\downarrow$} & \multicolumn{2}{c}{F1 $\uparrow$} \\
\cmidrule(lr){2-4}\cmidrule(lr){5-6}
Dataset & TD & MC & HI & TD & MC \\
\midrule
FD001 & $0.224\pm0.012$ & $0.283\pm0.015$ & $\mathbf{0.211\pm0.015}$ & --- & --- \\
FD002 & $\mathbf{0.218\pm0.005}$ & $0.223\pm0.006$ & $0.248\pm0.008$ & --- & --- \\
FD003 & $\mathbf{0.212\pm0.010}$ & $0.287\pm0.019$ & $0.308\pm0.022$ & $0.912\pm0.009$ & $0.921\pm0.000$ \\
FD004 & $\mathbf{0.201\pm0.005}$ & $0.218\pm0.007$ & $0.239\pm0.010$ & $0.714\pm0.002$ & $0.717\pm0.003$ \\
\bottomrule
\end{tabular}
\end{table}
Two observations summarize the full-trajectory results. First, TD$(n,\lambda)$ leads MC on all four subsets in overall NAE (Table~\ref{tab:cmapss_fulltraj}), consistent with the simulation ranking. By remaining-life bucket, TD mostly leads MC and HI after the earliest 20\% of observed life. Second, HI is best on FD001 but trails TD on FD002--FD004; its FD001 gain is concentrated in the 100--80\% bucket, before trajectory evidence accumulates.
\label{par:supervision_caveat}The F1 columns compare different supervision. TD trains a Bellman-propagated mode-GVF head without explicit mode-label loss on nonterminal windows; MC trains on the clustering-derived reference labels from Section~\ref{sec:exp_cmapss} \citep{fu2025degradation}. The same reference labels score both methods, and TD's terminal/near-terminal anchors use the same pseudo-mode labels. Both subsets are ties within noise.

\begin{figure}[htbp]
  \centering
  \includegraphics[width=\textwidth]{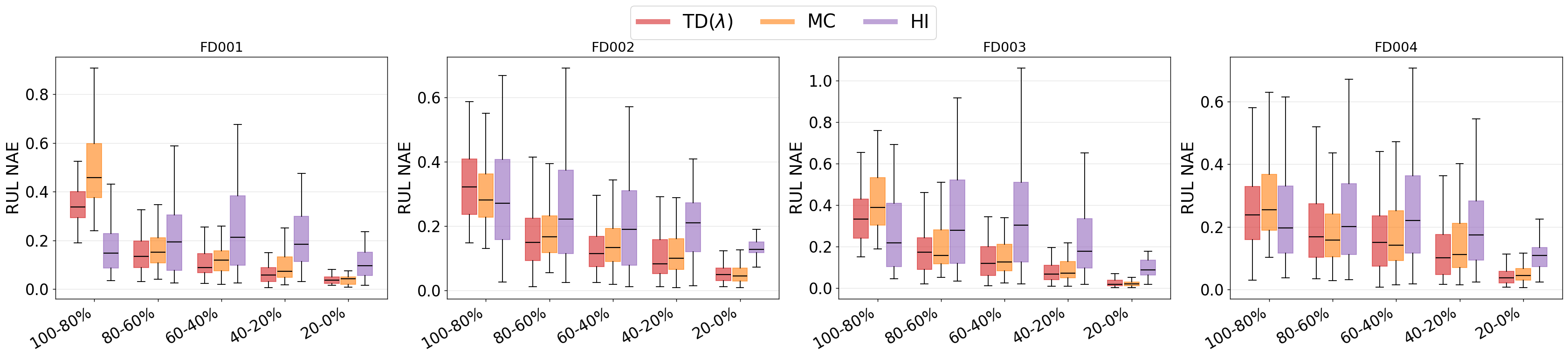}
  \caption{C-MAPSS full-data RUL NAE by remaining-life bucket (TD$(n,\lambda)$, MC, and HI).
    After the earliest stage, TD$(n,\lambda)$ typically maintains lower error than MC and HI.}
  \label{fig:cmapss_rul_boxplot}
\end{figure}

\subsection{Label-Scarce Stitch Results}
\label{sec:exp_cmapss_stitch}

Table~\ref{tab:cmapss_stitch_main} reports stitch RUL NAE and endpoint minority-class F1
(FD003/FD004) for TD$(n,\lambda)$ and the same-backbone MC control at
$f \in \{10\%,20\%,50\%\}$ under the anonymous-segment protocol of
Section~\ref{sec:exp_protocol}.
Complete results including the HI baseline, collapse diagnostics, and by-bucket
fraction plots appear in Appendix~\ref{app:cmapss_stitch_full}.

TD leads MC on every subset and fraction in Table~\ref{tab:cmapss_stitch_main}.
At $f{=}50\%$, TD stitch RUL remains comparable to full-data MC baselines on
FD003/FD004.
Endpoint minority-F1 is reported with classification loss masked on anonymous
nonterminal windows (leaving it unmasked silently inflates accuracy).
TD's mode head is stable across the segment-pool sweep, with smaller SEM than MC,
and has the higher mean F1 in every dataset--fraction cell.
Under the same stitch protocol, MC trains ordinary supervised classification only
on terminal-containing segments, whereas TD propagates mode-hitting targets from
terminal anchors by Bellman recursion on nonterminal windows.
Stitch F1 should not be compared directly to the full-data endpoint results in
Table~\ref{tab:cmapss_fulltraj}: training data and evaluation windows differ.

\begin{table}[htbp]
\centering
\caption{C-MAPSS label-scarce stitch RUL NAE ($\downarrow$) and minority-class F1 ($\uparrow$)
for TD$(n,\lambda)$ and same-backbone MC (mean~$\pm$~SEM across $3$ seeds).
\textbf{Bold} = best mean per dataset--fraction; HI and full results in Appendix~\ref{app:cmapss_stitch_full}.}
\label{tab:cmapss_stitch_main}
\setlength{\tabcolsep}{4pt}
\begin{tabular}{@{}llccc@{}}
\toprule
Dataset & Method & $f{=}10\%$ & $f{=}20\%$ & $f{=}50\%$ \\
\midrule
\multicolumn{5}{@{}l@{}}{\emph{RUL NAE $\downarrow$}} \\
FD001 & TD & $\mathbf{0.228\pm0.010}$ & $\mathbf{0.216\pm0.008}$ & $\mathbf{0.295\pm0.069}$ \\
      & MC & $0.410\pm0.034$ & $0.433\pm0.040$ & $0.310\pm0.019$ \\
FD002 & TD & $\mathbf{0.257\pm0.002}$ & $\mathbf{0.243\pm0.006}$ & $\mathbf{0.238\pm0.003}$ \\
      & MC & $0.633\pm0.016$ & $0.593\pm0.015$ & $0.597\pm0.014$ \\
FD003 & TD & $\mathbf{0.248\pm0.004}$ & $\mathbf{0.238\pm0.006}$ & $\mathbf{0.261\pm0.023}$ \\
      & MC & $0.653\pm0.002$ & $0.614\pm0.038$ & $0.551\pm0.039$ \\
FD004 & TD & $\mathbf{0.244\pm0.005}$ & $\mathbf{0.231\pm0.006}$ & $\mathbf{0.211\pm0.008}$ \\
      & MC & $0.618\pm0.012$ & $0.598\pm0.019$ & $0.612\pm0.006$ \\
\midrule
\multicolumn{5}{@{}l@{}}{\emph{Minority-class F1 $\uparrow$}} \\
FD003 & TD & $\mathbf{0.901\pm0.011}$ & $\mathbf{0.918\pm0.003}$ & $\mathbf{0.921\pm0.000}$ \\
      & MC & $0.718\pm0.067$ & $0.666\pm0.052$ & $0.773\pm0.066$ \\
FD004 & TD & $\mathbf{0.713\pm0.002}$ & $\mathbf{0.712\pm0.001}$ & $\mathbf{0.715\pm0.002}$ \\
      & MC & $0.639\pm0.030$ & $0.673\pm0.018$ & $0.674\pm0.021$ \\
\bottomrule
\end{tabular}
\end{table}

TD reproduces the full-trajectory RUL ranking and reaches full-supervision-competitive
stitch RUL without complete returns or unit identity.
The magnitude of the TD--MC gap varies across datasets and segment-pool fractions.
C-MAPSS supports, but does not universally confirm, the synthetic claim.
Additional simulation and C-MAPSS embedding visualizations are provided in
Appendix~\ref{app:umap}.

\section{Conclusion}
\label{sec:conclusion}

We formulate dynamic prognostics as vector-GVF prediction on an absorbing degradation process, so RUL and failure-mode outputs are temporally extended targets on a shared trajectory rather than independent supervised labels.
The TD($n,\lambda$) estimator learns these targets from bootstrapped Bellman transitions and therefore uses partial or anonymous trajectory prefixes that lack complete returns.
Empirically, under the label-scarce stitch protocol, TD remains competitive where same-backbone MC and HI degrade or collapse, while remaining comparable to MC under full run-to-failure supervision.

Several limitations remain. TD$(n,\lambda)$ requires selecting $n$ and $\lambda$. Within-run checkpoints use validation data; across-configuration C-MAPSS choices use minimum test-endpoint RMSE because no disjoint split exists (Section~\ref{sec:exp_cmapss_preproc}), so reported C-MAPSS results are benchmark-tuned and may be optimistic. The same rule is applied to TD, MC, and HI, but fully held-out selection at both stages would be preferable. Probability calibration of the mode head warrants further study. The failure set $\cF$ and mode-specific terminals $\cF_k$ must be defined operationally. Future work includes connecting GVF predictions to maintenance decision optimization. 

\section*{Disclosure Statement}
The authors report there are no competing interests to declare.

\section*{Declaration of Generative AI and AI-Assisted Technologies}
The authors disclose the following use of generative AI tools in preparing this work. A combination of \textbf{Claude Code} with \textbf{Claude Opus~4.8} and \textbf{Sonnet~5.0} (Anthropic), \textbf{ChatGPT 5.5, 5.6 Pro, and Codex 5.5} under a \textbf{ChatGPT Pro} subscription (OpenAI), Cursor (mostly \textbf{Composer 2.5 Fast} model) was used to (i)~assist with English-language proofreading, copy-editing, and stylistic polishing of manuscript prose, and (ii)~assist with drafting and debugging experiment and analysis code used for the simulation and C-MAPSS studies. The authors reviewed, edited, and take full responsibility for the accuracy and integrity of all content, including text, equations, code, and reported findings. 

\section*{Data Availability Statement}
{\sloppy
The NASA C-MAPSS dataset is publicly available from NASA's prognostics data repository
(\url{https://ti.arc.nasa.gov/tech/dash/groups/pcoe/prognostic-data-repository/});
the simulation study uses a synthetic event-triggered degradation process described in
Section~\ref{sec:sim_study}, with the exact generating parameters given in
Appendix~\ref{app:sim_dgp_constants}.
Code, configuration files, and seed-level outputs supporting the reported tables and
figures are provided in the accompanying reproducibility repository that will be
released once the paper is accepted.
\par}

\printbibliography

\clearpage
\appendix

\section{TD($n,\lambda$) Training Pseudocode}
\label{app:weighted_tdn}

Algorithm~\ref{alg:weighted_tdn} gives the training loop corresponding to
Section~\ref{sec:td_estimator}: store transition subsequences, form vector
TD$(n,\lambda)$ targets from
Eqs.~\eqref{eq:vector_tdn_target}--\eqref{eq:vector_tdnl_target}, and update
the shared network by a semi-gradient step on Eq.~\eqref{eq:total_loss}.

\begin{algorithm}[htbp]
\caption{TD($n,\lambda$) learning for vector GVFs.}
\label{alg:weighted_tdn}
\begin{algorithmic}[1]
\Require replay buffer $\cD$, truncation length $n$, trace parameter $\lambda$, learning rate $\alpha$, mini-batch size $B$.
\State Initialize vector-GVF network $V_\theta$ with shared encoder and task heads, and initialize replay buffer $\cD$.
\For{each observed trajectory segment}
    \State Construct windowed states $S_t$ and transition subsequences $(S_t,u_{t+1},\Gamma_{t+1},\ldots,S_{t+n})$.
    \State Store available subsequences and terminal information in $\cD$.
\EndFor
\If{$|\cD|\ge B$}
    \State Sample a mini-batch from $\cD$.
    \For{each mini-batch sample}
        \State Compute $\{y_t^{(k)}\}_{k=1}^{n}$ from Eq.~\eqref{eq:vector_tdn_target} (with $n\leftarrow k$), then compute $y_t^{(n,\lambda)}$ from Eq.~\eqref{eq:vector_tdnl_target}.
        \State Update $\theta$ by a semi-gradient step on the weighted vector-GVF loss in Eq.~\eqref{eq:total_loss}.
    \EndFor
\EndIf
\end{algorithmic}
\end{algorithm}

\section{Health-Index Baseline Implementation}
\label{app:hi_baseline}
\label{sec:exp_hi_methods}

To complement the GVF estimators with a trajectory-modeling baseline, we include a deep health-index method, referred to as \emph{HI} in the tables. It uses a one-dimensional convolutional encoder over sliding sensor windows, implemented via the same CNN1D architecture (F$\rightarrow$256$\rightarrow$96$\rightarrow$32, kernel size 7, ReLU, global average pooling, FC $32\rightarrow 64\rightarrow 128$) as the TD/MC heads, with a learned health-index output and a terminal-threshold variance penalty to encourage monotonicity. It uses a scalar degradation index $h_t = f_\theta(x_t)$ from the sliding sensor window $x_t$. Training encourages monotone degradation, stable terminal thresholds, and smooth trajectories. After health-index construction, prognostics follows the classical two-step logic: fit a degradation model in health-index space on training trajectories, then update unit-level posteriors from partial observations to infer residual life. The HI baseline infers RUL by Brownian first-passage to a learned failure threshold. Its hyperparameters are selected under the same training budget and per-platform selection protocol as the other methods---within-run epoch checkpoints on validation data, and the canonical configuration across hyperparameter settings ($\bar\gamma$, $n$, $\lambda$, window length, and early-stopping patience) selected as described for each case study (Sections~\ref{sec:exp_protocol} and~\ref{sec:exp_cmapss_preproc}). All methods share identical preprocessing and data splits.

\section{C-MAPSS Classification by Remaining-Life Bucket}
\label{app:cmapss_cls_bucket}

Figure~\ref{fig:cmapss_cls_barplot} reports full-trajectory failure-mode classification
accuracy by remaining-life bucket on FD003/FD004 (mask-corrected evaluation),
complementing the endpoint minority-class F1 in Table~\ref{tab:cmapss_fulltraj}.

\begin{figure}[htbp]
  \centering
  \includegraphics[width=\textwidth]{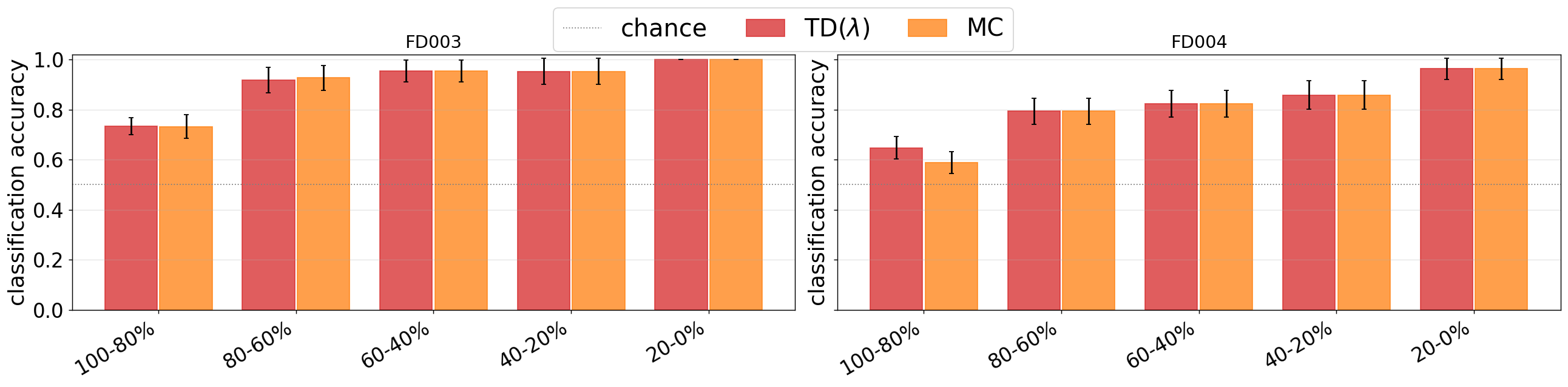}
  \caption{C-MAPSS full-data failure-mode classification accuracy by remaining-life bucket
    (FD003 and FD004; mask-corrected evaluation).
    TD uses the separately trained $\lambda{=}0$ joint model from the full-trajectory
    $\lambda$ sweep (Section~\ref{sec:exp_cmapss_fulltraj}); MC uses the same joint architecture.}
  \label{fig:cmapss_cls_barplot}
\end{figure}

\section{Complete C-MAPSS Label-Scarce Stitch Results}
\label{app:cmapss_stitch_full}

Table~\ref{tab:cmapss_stitch_full} expands Table~\ref{tab:cmapss_stitch_main} to include
the HI baseline.
MC and HI are prone to collapse under sparse anonymity (MC in $17\%$ of runs,
including every FD003 $f{=}10\%$ seed with only $14$ terminal-observing segments;
HI in $35\%$ of runs, including every FD002 $f{=}50\%$ seed).
The dagger mark on FD001 TD at $f{=}50\%$ flags elevated seed variance for that cell.
Figures~\ref{fig:cmapss_stitch_frac_boxplot} and~\ref{fig:cmapss_stitch_f1_fd003}
give by-bucket RUL and stitch F1 displays.

\begin{table}[htbp]
\centering
\caption{Complete C-MAPSS label-scarce stitch results, including the
health-index baseline, for all subsets and data fractions.
Each cell is one fleet-level summary per training seed, reported as
mean~$\pm$~SEM across three seeds ($\mathrm{SEM}=s/\sqrt{3}$).
Bold denotes the best mean within each dataset--fraction comparison among the
methods shown.
$^{\ddagger}$Elevated seed variance for that TD cell.}
\label{tab:cmapss_stitch_full}
\begin{tabular}{@{}llccc@{}}
\toprule
\multicolumn{5}{@{}l@{}}{\emph{Stitch RUL NAE $\downarrow$}} \\
Dataset & Method & $f{=}10\%$ & $f{=}20\%$ & $f{=}50\%$ \\
\midrule
FD001 & TD & $\mathbf{0.228\pm0.010}$ & $\mathbf{0.216\pm0.008}$ & $\mathbf{0.295\pm0.069}^{\ddagger}$ \\
FD001 & MC & $0.410\pm0.034$ & $0.433\pm0.040$ & $0.310\pm0.019$ \\
FD001 & HI & $1.772\pm0.781$ & $1.942\pm1.314$ & $0.711\pm0.005$ \\
FD002 & TD & $\mathbf{0.257\pm0.002}$ & $\mathbf{0.243\pm0.006}$ & $\mathbf{0.238\pm0.003}$ \\
FD002 & MC & $0.633\pm0.016$ & $0.593\pm0.015$ & $0.597\pm0.014$ \\
FD002 & HI & $1.573\pm0.983$ & $0.620\pm0.078$ & $0.703\pm0.003$ \\
FD003 & TD & $\mathbf{0.248\pm0.004}$ & $\mathbf{0.238\pm0.006}$ & $\mathbf{0.261\pm0.023}$ \\
FD003 & MC & $0.653\pm0.002$ & $0.614\pm0.038$ & $0.551\pm0.039$ \\
FD003 & HI & $0.591\pm0.107$ & $1.409\pm0.609$ & $1.316\pm0.690$ \\
FD004 & TD & $\mathbf{0.244\pm0.005}$ & $\mathbf{0.231\pm0.006}$ & $\mathbf{0.211\pm0.008}$ \\
FD004 & MC & $0.618\pm0.012$ & $0.598\pm0.019$ & $0.612\pm0.006$ \\
FD004 & HI & $0.938\pm0.429$ & $0.849\pm0.176$ & $0.681\pm0.003$ \\
\midrule
\multicolumn{5}{@{}l@{}}{\emph{Stitch minority-class F1 $\uparrow$}} \\
Dataset & Method & $f{=}10\%$ & $f{=}20\%$ & $f{=}50\%$ \\
\midrule
FD003 & TD & $\mathbf{0.901\pm0.011}$ & $\mathbf{0.918\pm0.003}$ & $\mathbf{0.921\pm0.000}$ \\
FD003 & MC & $0.718\pm0.067$ & $0.666\pm0.052$ & $0.773\pm0.066$ \\
FD004 & TD & $\mathbf{0.713\pm0.002}$ & $\mathbf{0.712\pm0.001}$ & $\mathbf{0.715\pm0.002}$ \\
FD004 & MC & $0.639\pm0.030$ & $0.673\pm0.018$ & $0.674\pm0.021$ \\
\bottomrule
\end{tabular}
\end{table}

\begin{figure}[htbp]
  \centering
  \includegraphics[width=\linewidth]{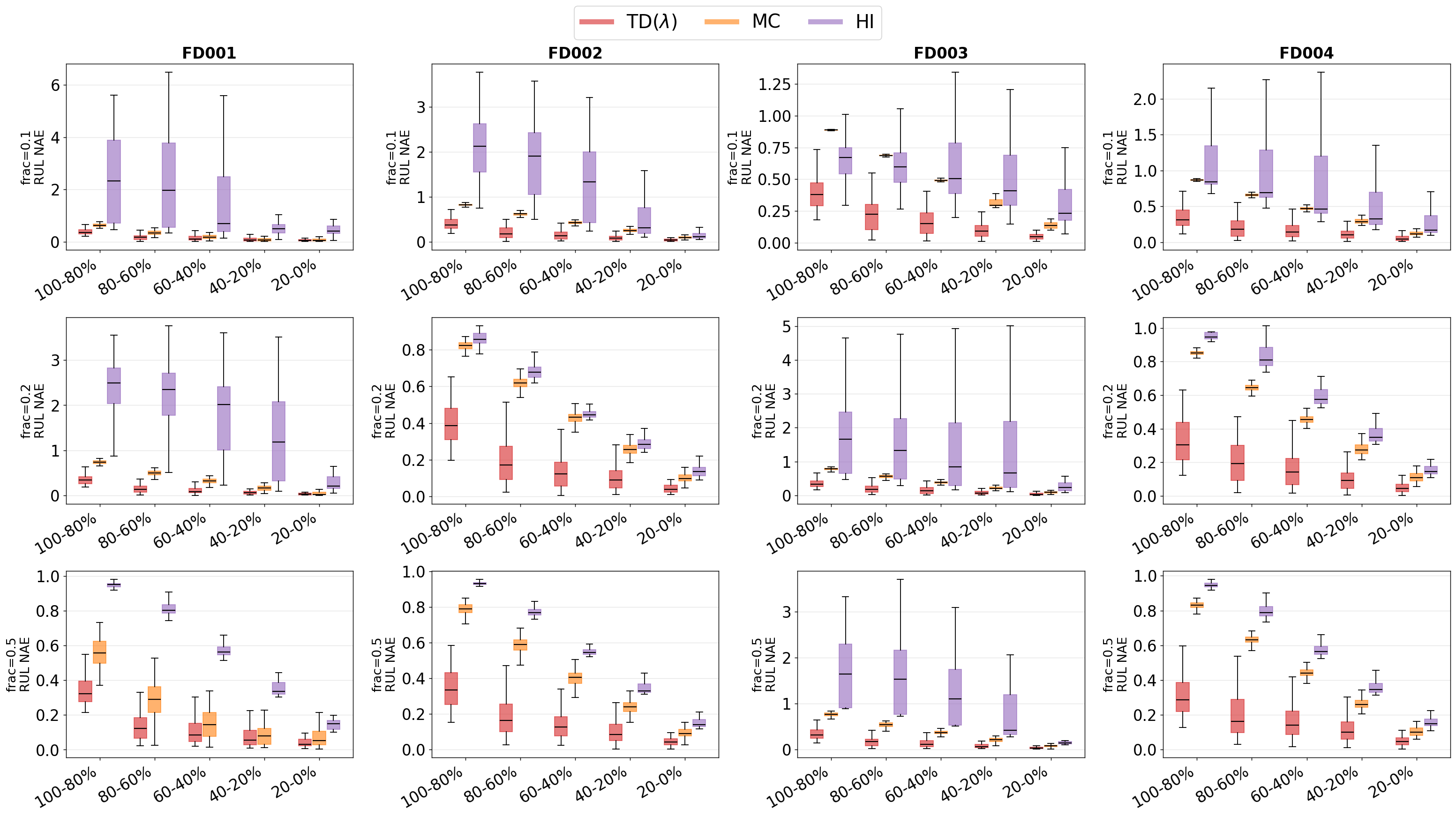}
  \caption{C-MAPSS label-scarce stitch: RUL NAE by remaining-life bucket across fractions $f$ and subsets FD001--FD004 for TD$(n,\lambda)$, MC, and HI.}
  \label{fig:cmapss_stitch_frac_boxplot}
\end{figure}

\begin{figure}[htbp]
  \centering
  \includegraphics[width=\linewidth]{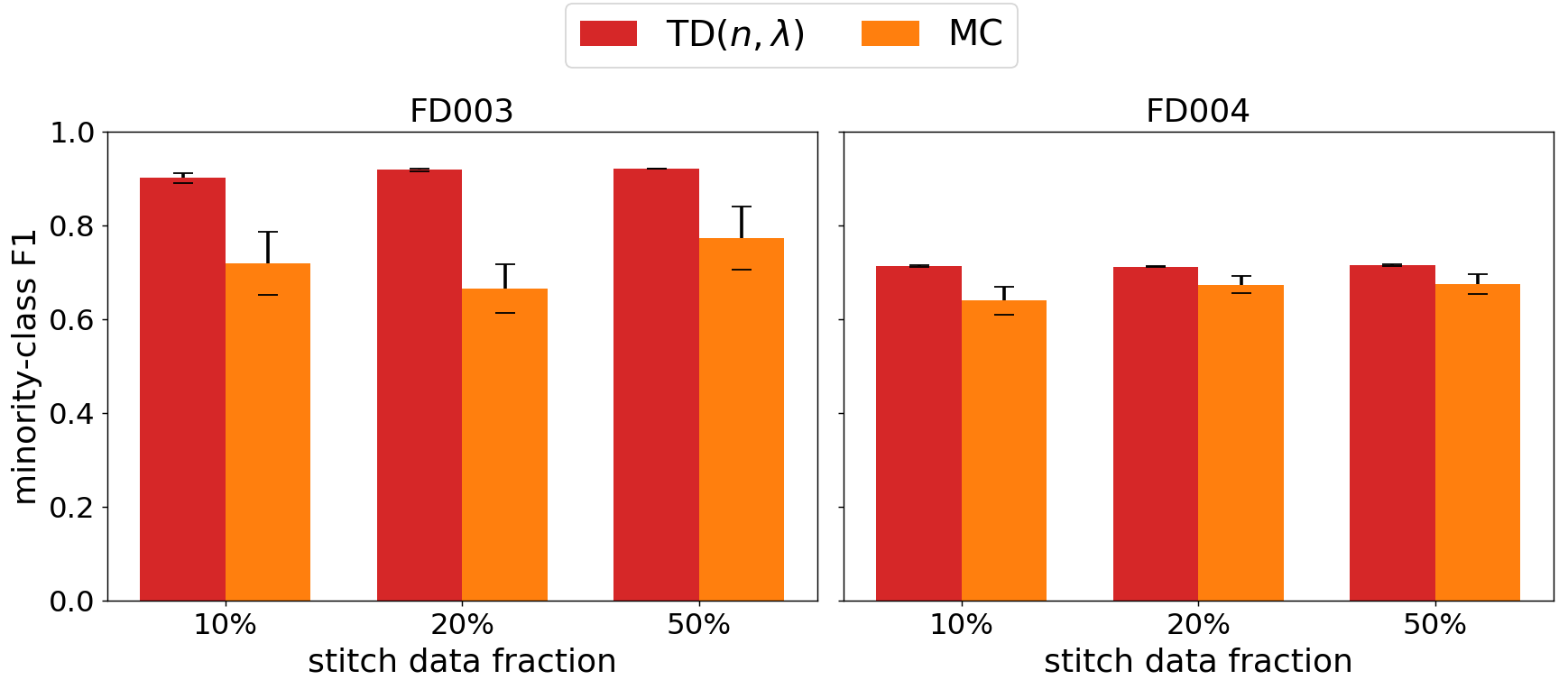}
  \caption{C-MAPSS stitch minority-class F1 for FD003/FD004,
    mean~$\pm$~SEM across training seeds
    (TD: $n{=}3$ throughout; MC: $n{=}3$ on FD003 and $n{=}6,8,8$ on FD004).
    TD is flat across fractions; MC has larger spread, especially on FD003.}
  \label{fig:cmapss_stitch_f1_fd003}
\end{figure}

\section{Embedding Visualizations (UMAP)}
\label{app:umap}
\label{sec:exp_cmapss_umap}

UMAP projections below are qualitative and parameter-dependent; they are not used as primary evidence.
Reported purity scores use 20-nearest-neighbor labels in the displayed UMAP coordinates and should be read as visual diagnostics only.

\subsection{Simulation Embeddings}

Figure~\ref{fig:umap_full_4models} shows two-dimensional UMAP projections of the
penultimate-layer CNN embeddings for all four methods on the simulation test set.
TD$(n,\lambda)$ maintains high failure-mode purity (0.80) in both full-data and stitch
settings; MC drops to near chance under stitch (0.55). In these projections, TD preserves
failure-mode structure more effectively under stitch anonymity than MC.

\begin{figure}[htbp]
  \centering
  \includegraphics[width=\linewidth]{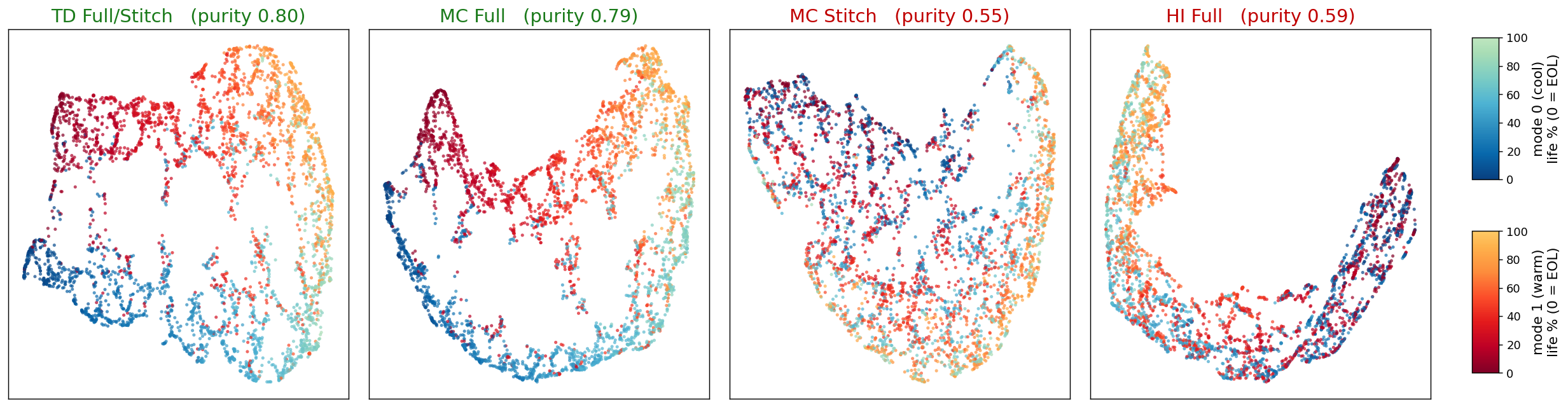}
  \caption{Simulation UMAP embeddings colored by remaining life. Panel purities:
    TD$(n,\lambda)$ full/stitch pooled 0.80, MC full 0.79, MC stitch 0.55, HI full 0.59.
    UMAP is visual only; purity uses 20-nearest-neighbor labels in UMAP space.}
  \label{fig:umap_full_4models}
\end{figure}

\subsection{C-MAPSS Embeddings}

Figure~\ref{fig:cmapss_umap} shows bivariate UMAP projections of penultimate CNN embeddings for TD$(n,\lambda)$ and MC on FD003/FD004 under $f{=}100\%$ and $f{=}50\%$ stitch fractions. Points encode failure mode by hue and normalized remaining life by shade. TD$(n,\lambda)$ forms a clearer life-stage gradient in these displays, consistent with the adjacent-window bootstrap mechanism evaluated quantitatively in Section~\ref{sec:exp_cmapss}.

\begin{figure}[htbp]
  \centering
  \includegraphics[width=0.49\linewidth]{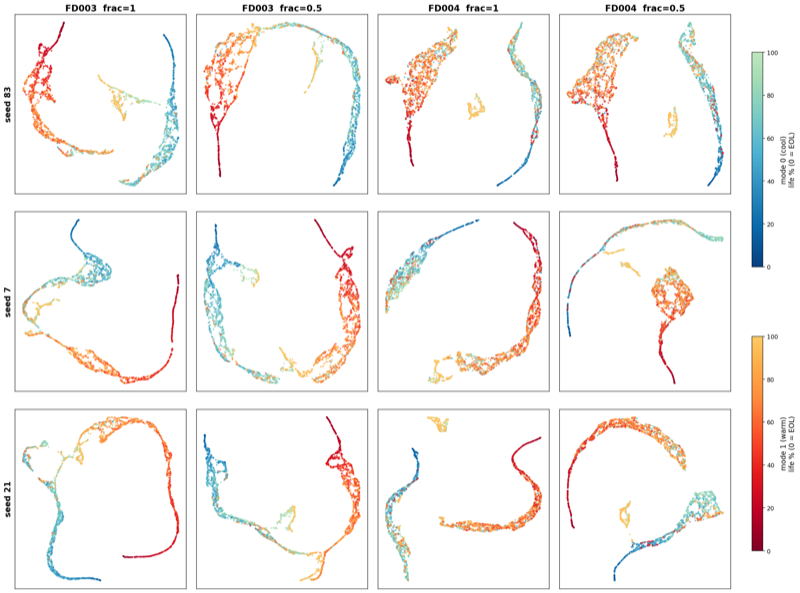}%
  \hfill%
  \includegraphics[width=0.49\linewidth]{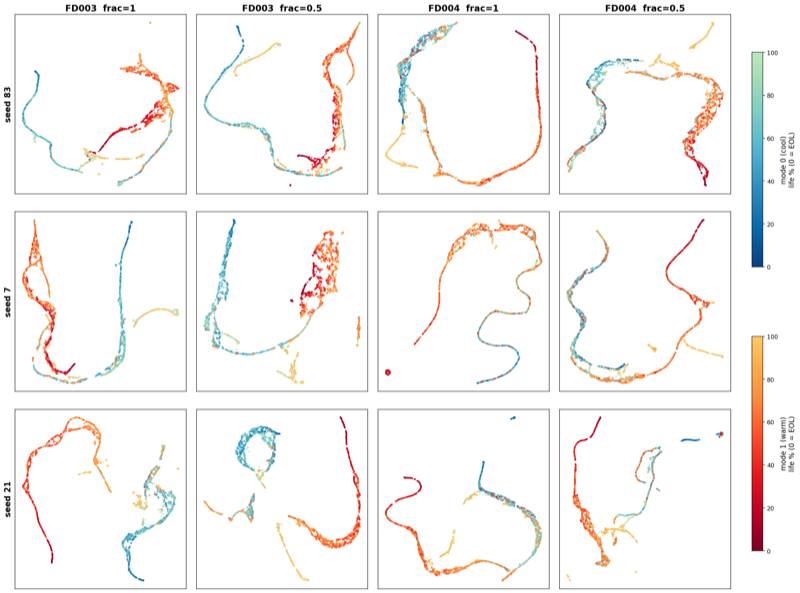}
  \caption{C-MAPSS UMAP embeddings on FD003/FD004, colored by failure mode and remaining life.
    Left: TD$(n,\lambda)$; right: MC; seeds $83$, $7$, $21$.
    Columns: FD003 $f{=}100\%$, FD003 $f{=}50\%$, FD004 $f{=}100\%$, FD004 $f{=}50\%$.}
  \label{fig:cmapss_umap}
\end{figure}

\section{Simulation Data-Generating Process: Exact Parameters}
\label{app:sim_dgp_constants}

Table~\ref{tab:sim_dgp_constants} lists the exact parameter values, priors, and sensor
construction formulas for the event-triggered simulation (ET-Long) of
Section~\ref{sec:sim_study}, supplementing the description given there.

\begin{table}[htbp]
\centering
\caption{ET-Long simulation data-generating process: exact parameter values.
Both failure-mode configurations use linear sensor fusion.
The failure modes differ in both the late-phase degradation kernel and the
post-trigger discriminative pattern.}
\label{tab:sim_dgp_constants}
\begin{tabular}{@{}ll@{}}
\toprule
Parameter & Value \\
\midrule
Sensor noise sd $\sigma_x$ & $0.05$ (all channels) \\
HI-process noise sd $\sigma_{hi}$ & $0.05$ (Phases A and B) \\
Channel coeffs.\ $a_{11},a_{21},a_{41}$ & $\mathrm{Unif}(0,2)$, $\mathrm{Unif}(0,1)$, $\mathrm{Unif}(0,1)$ \\
Channel coeffs.\ $a_{12},a_{22},a_{42}$ & $\mathrm{Unif}(0,0.5)$ each \\
Activation threshold $\theta$ & $\mathrm{Unif}(0.4,0.6)$, clipped to $[0.05,0.95]$ \\
Failure threshold $B_{\mathrm{threshold}}$ & $5.0$ \\
Phase-B amplitude scale $\rho_B$ & $1.8$ \\
Time step $dt$ & $0.005$ \\
Phase-A / Phase-B horizons & $2.0$ / $10.0$ \\
\midrule
\multicolumn{2}{@{}p{0.95\linewidth}@{}}{%
Early-phase $g_A(t)$ (both modes):
$\gamma_0 + \gamma_1(e^t-1) + \gamma_2 t^3$,
$\gamma \sim \mathcal{N}([0.0,0.5,1.5],\,\Sigma_A)$.} \\
\multicolumn{2}{@{}p{0.95\linewidth}@{}}{%
Late-phase Mode~1 (quadratic):
$\gamma_0 + \gamma_1 s + \gamma_2 s^2$,
$\gamma \sim t_{\nu=5}([0.2,1.0,2.0],\,\Sigma_{B,\mathrm{quad}})$.} \\
\multicolumn{2}{@{}p{0.95\linewidth}@{}}{%
Late-phase Mode~2 (exp-cubic): same functional form and prior as the early phase.} \\
\bottomrule
\end{tabular}
\end{table}

with
$\Sigma_A = \left(\begin{smallmatrix}0.01&0.001&0.001\\0.001&0.1&0.005\\0.001&0.005&0.2\end{smallmatrix}\right)$
and
$\Sigma_{B,\mathrm{quad}} = \left(\begin{smallmatrix}0.01&0.001&0.001\\0.001&0.1&0.05\\0.001&0.05&0.2\end{smallmatrix}\right)$
(late-phase Mode~2 reuses $\Sigma_A$). Sensor construction (linear fusion):
$x_1 = a_{11}t^2 - a_{12}\sin(25t) + \mathcal{N}(0,\sigma_x^2)$,
$x_2 = a_{21}t + a_{22}\sin(50t) + \mathcal{N}(0,\sigma_x^2)$,
$x_4 = a_{41}t + a_{42} + \mathcal{N}(0,\sigma_x^2)$; the vector enrichment is
$\boldsymbol{p}(s)=p_0(u)\,(1,0,0,0)^\top$ from Eq.~\eqref{eq:dgp}, so only $x_1$
receives the mode-discriminative post-trigger shape on top of this base additive DGP.
Mode~1 uses the monotone pattern $\alpha u^{1.5}$; Mode~2 uses the oscillatory
$\sqrt{u}$-enveloped pattern
$\mathrm{amp}\cdot\sin(2\pi\,n_{\mathrm{cyc}}\cdot\mathrm{norm}+\phi_p)\sqrt{\mathrm{norm}}
+\mathrm{amp}\cdot0.15\cdot\mathrm{norm}$, with a random phase $\phi_p\sim\mathrm{Unif}(0,2\pi)$
and a small mean shift growing with the normalized post-trigger time $\mathrm{norm}$.
Independently of this post-trigger enrichment, both modes also receive a shared
pre-trigger enrichment on $x_1,x_2$: a slow sinusoid, a fast sinusoid, and a linear
drift term ($A_{\mathrm{slow}}\sin(2\pi t/T_{\mathrm{slow}}+\phi_{\mathrm{slow}})
+A_{\mathrm{fast}}\sin(2\pi t/T_{\mathrm{fast}}+\phi_{\mathrm{fast}})+\mathrm{drift}\cdot t$),
drawn from the same distribution for both modes so it does not itself carry
mode-discriminative information before the trigger.

\section{Proofs for the Statistical Properties of the Estimator}
\label{app:proof_algorithmic_properties}

Results that are
standard in the TD literature are proved by verifying that our setting satisfies the
hypotheses of the corresponding cited theorems; results specific to this paper are
proved in full. Table~\ref{tab:notation} summarizes the notation used throughout.

\begin{table}[htbp]
\centering
\caption{Notation summary.}
\label{tab:notation}
\begin{tabular}{ll}
\toprule
Symbol & Meaning \\
\midrule
$\cS$ & Degradation state space \\
$S_t$ & State at time $t$ \\
$\cF$ & Absorbing failure set \\
$\cF_k$ & Terminal set for failure mode $k$ \\
$T$ & Absorption/failure time \\
$K$ & Number of failure modes \\
$c_{t+1}^{(i)}$ & Cumulant for GVF component $i$ \\
$\gamma_{t+1}^{(i)}$ & Continuation for GVF component $i$ \\
$V_\gamma^{\mathrm{time}}$ & Discounted survival-time (RUL) GVF \\
$H_\gamma^{(k)}$ & Hitting-probability GVF for failure mode $k$ \\
$u_{t+1}$ & Vector cumulant \\
$\Gamma_{t+1}$ & Diagonal continuation matrix \\
$\Pi_\Psi$ & $L_2(\Psi)$ projection onto feature span \\
$A_i,b_i$ & Linear projected Bellman system for component $i$ \\
$y_t^{(n)}$ & Vector TD($n$) target \\
$y_t^{(n,\lambda)}$ & Truncated vector TD($n,\lambda$) target \\
$y_t^{\mathrm{MC}}$ & Vector Monte Carlo target \\
\bottomrule
\end{tabular}
\end{table}

\subsection{Standard supporting facts}
\label{app:standard_supporting}

The following assumptions and results are standard in TD analysis
\citep{tsitsiklis1997analysis, sutton2018reinforcement}; but adapted for the vector-valued GVF formulation of Section~\ref{sec:vector_gvf}.

\begin{assumption}[A1. Ergodicity on the learning process]
\label{ass:ergodicity}
The state process used for learning is geometrically ergodic with stationary distribution $\Psi$ (or is converted to an ergodic regenerative process by episodic reset at absorption for theoretical analysis).
\end{assumption}

\begin{assumption}[A2. Bounded cumulants]
\label{ass:bounded_cumulant}
Each GVF cumulant component is bounded almost surely: $|u_{t+1}^{(i)}|\le \bar c_i<\infty$.
\end{assumption}

\begin{assumption}[A3. Bounded features]
\label{ass:bounded_features}
For linear analysis, the feature map satisfies $\phi:\cS\to\R^d$ with $\|\phi(s)\|_2\le 1$.
\end{assumption}

\begin{assumption}[A4. Well-posed projected equations]
\label{ass:well_posed_projected}
For each GVF component $i$, the projected fixed-point equation for the $\lambda$-return operator has a unique, stable solution under $\Psi$: the symmetric part $\tfrac12(A_{i,n,\lambda}+A_{i,n,\lambda}^\top)$ of the corresponding (generally nonsymmetric) TD normal matrix $A_{i,n,\lambda}$ is positive definite, a sufficient condition for $-A_{i,n,\lambda}$ to be Hurwitz. (Mere nonsingularity of $A_{i,n,\lambda}$ is insufficient because the matrix is nonsymmetric.)
\end{assumption}

\begin{assumption}[A5. Step-size conditions]
\label{ass:step_size}
The TD step sizes satisfy $\sum_t \alpha_t=\infty$, $\sum_t \alpha_t^2<\infty$, with standard Robbins--Monro conditions.
\end{assumption}

\begin{assumption}[A6. Termination or contraction]
\label{ass:termination_or_contraction}
For each component $i$, either $\gamma_{t+1}^{(i)}\le \bar\gamma_i<1$ almost surely, or the absorbing episodic process terminates almost surely with \emph{uniformly} bounded expected absorption time, $\sup_{s\in\cC}\E_s[\tau]<\infty$ (for a finite transient set, equivalently $\rho(P_{\cC\cC})<1$), which gives finite expected return.
\end{assumption}

Under Assumptions~\ref{ass:ergodicity}--\ref{ass:step_size},
Sections~\ref{app:proof_fixed_point}--\ref{app:td_mc_realizability} prove
Theorem~\ref{thm:bellman_td_convergence}. Claim~(i) follows from the
contraction property of the component Bellman operators and a geometric
bound on the discounted return. Claim~(ii) follows from the standard
on-policy stochastic-approximation argument for the projected
TD$(n,\lambda)$ equation. Claim~(iii) follows by comparing the projected
Bellman solution with the least-squares projection of the complete
return.

The discounted contraction used in claim~(i) does not cover the
undiscounted case $\gamma\equiv 1$. That case is treated separately in
Proposition~\ref{prop:undiscounted_absorbing_fixed_point} under a
uniform finite-absorption-time condition.

\subsection{Proof of Proposition~\ref{prop:soft_horizon_bound_theory}}
\label{app:proof_soft_horizon_bound}

Fix $s\in\cC$ and $\gamma\in[0,1)$. By definition,
$V_\gamma^{\mathrm{time}}(s)=\E_s\!\left[\sum_{\ell=0}^{\tau_t-1}\gamma^\ell\right]$,
where $\tau_t\ge 1$ on $\cC$ and $\tau_t$ may be infinite on trajectories that never
reach $\cF$. Since every summand $\gamma^\ell$ is nonnegative, the random partial sum
$\sum_{\ell=0}^{\tau_t-1}\gamma^\ell$ is nonnegative pathwise, so
$V_\gamma^{\mathrm{time}}(s)\ge 0$. For the upper bound, on every sample path (whether
$\tau_t$ is finite or infinite),
\[
\sum_{\ell=0}^{\tau_t-1}\gamma^\ell
\le
\sum_{\ell=0}^{\infty}\gamma^\ell
=
\frac{1}{1-\gamma},
\]
because the geometric series with ratio $\gamma\in[0,1)$ converges and its partial sums
are monotone increasing in the number of terms. Taking expectations under $\E_s$ and
using monotonicity of expectation yields
$V_\gamma^{\mathrm{time}}(s)\le (1-\gamma)^{-1}$. In particular, the expectation is
well defined and finite without any moment condition on $\tau_t$.
\qed

\subsection{Proof of Proposition~\ref{prop:terminal_event_mode_tilting}}
\label{app:proof_terminal_event_mode_tilting}

Fix $s\in\cC$ and $\gamma\in[0,1)$. By Definition~\ref{def:mode_terminal_event_gvf},
\[
H_\gamma^{(k)}(s)
=
\E_s\!\left[\gamma^{\tau_t-1}\one\{S_{t+\tau_t}\in\cF_k\}\right],
\]
where the integrand is set to zero on the event $\{\tau_t=\infty\}$ (no absorption), so
the expectation is well defined and bounded by $1$.

\emph{Factorization.} On the absorption event, the terminal mode $Z$ is the index $k$
with $S_{t+\tau_t}\in\cF_k$, and the mode sets $\cF_1,\ldots,\cF_K$ are disjoint, so
$\one\{S_{t+\tau_t}\in\cF_k\}=\one\{Z=k\}$. Decomposing the expectation over the event
$\{Z=k\}$ and its complement (on which the integrand vanishes), and applying the
definition of conditional expectation,
\[
H_\gamma^{(k)}(s)
=
\E_s\!\left[\gamma^{\tau_t-1}\one\{Z=k\}\right]
=
\Prob_s(Z=k)\,\E_s\!\left[\gamma^{\tau_t-1}\mid Z=k\right]
=
p_k(s)\,a_k(s;\gamma),
\]
which is Eq.~\eqref{eq:mode_tilting_factorization}. (If $p_k(s)=0$, both sides are
zero and the factorization holds trivially with any convention for $a_k$.)

\emph{Limit as $\gamma\uparrow1$.} Conditional on $Z=k$, absorption occurs and
$\tau_t<\infty$ almost surely, so $\gamma^{\tau_t-1}\to 1$ pointwise as
$\gamma\uparrow1$. Since $0\le\gamma^{\tau_t-1}\le 1$, dominated convergence gives
$a_k(s;\gamma)=\E_s[\gamma^{\tau_t-1}\mid Z=k]\to 1$, and hence
$H_\gamma^{(k)}(s)=p_k(s)a_k(s;\gamma)\to p_k(s)$ for every $k$.

\emph{Normalized score.} If $\sum_{j=1}^K p_j(s)>0$, then
$\sum_{j=1}^K H_\gamma^{(j)}(s)\to\sum_{j=1}^K p_j(s)$ by the previous step, and the
denominator is strictly positive for $\gamma$ close enough to $1$. Therefore
\[
\tilde p_{\gamma,k}(s)
=
\frac{H_\gamma^{(k)}(s)}{\sum_{j=1}^K H_\gamma^{(j)}(s)}
\;\longrightarrow\;
\frac{p_k(s)}{\sum_{j=1}^K p_j(s)},
\]
which equals $p_k(s)$ whenever absorption is almost sure from $s$
(i.e., $\sum_{j} p_j(s)=1$).
\qed

\subsection{Proof of Proposition~\ref{prop:mc_td_target_variance}}
\label{app:proof_mc_td_target_variance}

Equation~\eqref{eq:mc_full_return} is the standard
prefix--residual partition of a discounted return: the first $n$ terms form $P_{t,n}$ and
the remaining discounted sum starting at time $t+n$ is $R_{t+n}$, so
$G_t^{\mathrm{MC}}=P_{t,n}+\gamma^{n}R_{t+n}$ holds pathwise on $\{\tau_t>n\}$ --- the
absorption time $\tau_t$ is itself random given only $S_t=s$, so all statements below are
conditional on the event $\{S_t=s,\,\tau_t>n\}$, on which $S_{t+n}$ is a well-defined
non-absorbed state and $R_{t+n}$ the (possibly further-absorbing) residual return from it.
Let $\mathcal H_{t,n}$ denote the sigma-field generated by the observed prefix through
$S_{t+n}$, including $S_t,S_{t+1},\ldots,S_{t+n}$ and the cumulants in $P_{t,n}$, on this
event. The prefix $P_{t,n}$ is $\mathcal H_{t,n}$-measurable.
Under the Markov predictive-state assumption,
\[
\E[R_{t+n}\mid \mathcal H_{t,n}]
=
\E[R_{t+n}\mid S_{t+n}]
=
V^\star(S_{t+n}).
\]
Therefore
\[
\E[G_t^{\mathrm{MC}}\mid \mathcal H_{t,n}]
=
P_{t,n}
+
\gamma^{n}
\E[R_{t+n}\mid \mathcal H_{t,n}]
=
P_{t,n}
+
\gamma^{n} V^\star(S_{t+n})
=
G_t^{\mathrm{TD},\star}.
\]
Applying the law of total variance conditional on $\{S_t=s,\,\tau_t>n\}$ gives
\begin{align*}
\Var(G_t^{\mathrm{MC}}\mid S_t=s,\,\tau_t>n)
&=
\Var\!\bigl(
\E[G_t^{\mathrm{MC}}\mid \mathcal H_{t,n}]
\mid S_t=s,\,\tau_t>n
\bigr) \\
&\quad+
\E\!\bigl[
\Var(G_t^{\mathrm{MC}}\mid \mathcal H_{t,n})
\mid S_t=s,\,\tau_t>n
\bigr].
\end{align*}
The first term is $\Var(G_t^{\mathrm{TD},\star}\mid S_t=s,\,\tau_t>n)$.
For the second term, $P_{t,n}$ is $\mathcal H_{t,n}$-measurable, so
\[
\Var(G_t^{\mathrm{MC}}\mid \mathcal H_{t,n})
=
\Var(P_{t,n}+\gamma^{n} R_{t+n}\mid \mathcal H_{t,n})
=
\gamma^{2n}\Var(R_{t+n}\mid \mathcal H_{t,n}).
\]
Using the Markov assumption again,
\[
\Var(R_{t+n}\mid \mathcal H_{t,n})
=
\Var(R_{t+n}\mid S_{t+n}).
\]
Substituting these identities yields Eq.~\eqref{eq:mc_td_variance_decomp}.
Since the final conditional-variance term is nonnegative, the oracle TD target has no
larger conditional variance than the MC target.
Equivalently, $G_t^{\mathrm{TD},\star}=\E[G_t^{\mathrm{MC}}\mid \mathcal H_{t,n}]$, so the
oracle TD target is a Rao--Blackwellized version of MC: it replaces the realized residual
by its conditional expectation given the observed prefix through $S_{t+n}$.
With a learned bootstrap $\widehat V=V^\star+e$, TD trades residual-return variance for
bootstrap-error variance and is advantageous when the latter is smaller.
\qed

\subsection{Proof of Theorem~\ref{thm:bellman_td_convergence}, claim~(i):
Population fixed point and value bound}
\label{app:proof_fixed_point}

Fix a GVF component $i$ with $|u_{t+1}^{(i)}|\le\bar c_i<\infty$ and
$0\le\gamma_{t+1}^{(i)}\le\bar\gamma_i<1$ almost surely. The component Bellman operator is
\begin{equation}
(\mathcal{T}_i v)(s)=\E\!\bigl[u_{t+1}^{(i)}+\gamma_{t+1}^{(i)}v(S_{t+1})\mid S_t=s\bigr].
\end{equation}
Then
\begin{equation}
\|\mathcal{T}_i v-\mathcal{T}_i w\|_\infty\le \bar\gamma_i\|v-w\|_\infty,
\end{equation}
so $\mathcal{T}_i$ is a $\bar\gamma_i$-contraction and has a unique fixed point $V_i^\star$
by Banach's theorem.

By induction on $k$, the $k$-fold composition satisfies, for any bounded $v,w$,
\[
\|\mathcal{T}_i^k v-\mathcal{T}_i^k w\|_\infty
\le
\bar\gamma_i^k\|v-w\|_\infty,
\]
so $\mathcal{T}_i^k$ is a contraction with modulus at most $\bar\gamma_i^k$. The truncated
TD$(n,\lambda)$ population operator
$\mathcal{T}_{i,n,\lambda}=(1-\lambda)\sum_{k=1}^{n-1}\lambda^{k-1}\mathcal{T}_i^k
+\lambda^{n-1}\mathcal{T}_i^n$ is a convex combination of these compositions (the weights
$(1-\lambda)\lambda^{k-1}$ for $k=1,\dots,n-1$ and $\lambda^{n-1}$ sum to one). By the
triangle inequality,
\[
\|\mathcal{T}_{i,n,\lambda}v-\mathcal{T}_{i,n,\lambda}w\|_\infty
\le
\rho_{i,n,\lambda}\|v-w\|_\infty,
\]
with
\[
\rho_{i,n,\lambda}
=
(1-\lambda)\sum_{k=1}^{n-1}\lambda^{k-1}\bar\gamma_i^k
+\lambda^{n-1}\bar\gamma_i^n
<1,
\]
because it is a convex combination of the values $\bar\gamma_i^k\le\bar\gamma_i<1$.
Hence $\mathcal{T}_{i,n,\lambda}$ is a contraction and has a unique fixed point. If
$\mathcal{T}_i V_i^\star=V_i^\star$, then $\mathcal{T}_i^k V_i^\star=V_i^\star$ for every
$k\ge1$, so $\mathcal{T}_{i,n,\lambda}V_i^\star=V_i^\star$; by uniqueness the two
operators share the same fixed point $V_i^\star$.

For the value bound, the return from $S_t=s$ is
\[
G_t^{(i)}=\sum_{\ell=0}^{\infty}\Bigl(\prod_{j=1}^{\ell}\gamma_{t+j}^{(i)}\Bigr)u_{t+\ell+1}^{(i)},
\]
with the empty product equal to $1$. Pathwise,
$\prod_{j=1}^{\ell}\gamma_{t+j}^{(i)}\le\bar\gamma_i^{\ell}$, so the series is dominated
in absolute value by $\sum_{\ell\ge0}\bar\gamma_i^{\ell}\bar c_i=\bar c_i/(1-\bar\gamma_i)$;
in particular $G_t^{(i)}$ is well defined and absolutely convergent almost surely, and
$V_i^\star(s)=\E[G_t^{(i)}\mid S_t=s]$ exists. By the triangle inequality and monotone
convergence applied to the dominating series,
\begin{equation}
|V_i^\star(s)|
\le
\E\!\left[\sum_{\ell=0}^{\infty}\Bigl(\prod_{j=1}^{\ell}\gamma_{t+j}^{(i)}\Bigr)
|u_{t+\ell+1}^{(i)}|\;\middle|\;S_t=s\right]
\le
\sum_{\ell=0}^{\infty}\bar\gamma_i^\ell \bar c_i
=\frac{\bar c_i}{1-\bar\gamma_i}.
\end{equation}
The right-hand side does not depend on $s$, so
$\sup_{s\in\cS}|V_i^\star(s)|\le \bar c_i/(1-\bar\gamma_i)$.

Because $\Gamma_{t+1}$ is diagonal, the stacked vector operator
$(\mathcal{T}V)(s)=\E[u_{t+1}+\Gamma_{t+1}V(S_{t+1})\mid S_t=s]$ decouples into the
component operators $\mathcal{T}_i$, and the same conclusions hold for the vector GVF.
\qed

\begin{proposition}[Undiscounted absorbing fixed point]
\label{prop:undiscounted_absorbing_fixed_point}
Under Assumptions~\ref{ass:bounded_cumulant} and~\ref{ass:termination_or_contraction}
with $\gamma_{t+1}^{(i)}\equiv 1$ on the transient set, if
$\sup_{s\in\cC}\E_s[\tau]<\infty$ (for a finite transient set, equivalently
$\rho(P_{\cC\cC})<1$), then the component Bellman operator $\mathcal{T}_i$ has a unique
bounded fixed point $V_i^\star$ on $\cC$ with terminal boundary $V_i^\star\equiv 0$ on
$\cF$.
\end{proposition}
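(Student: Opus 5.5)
Since the discount contraction of claim~(i) of Theorem~\ref{thm:bellman_td_convergence} is unavailable when $\gamma^{(i)}_{t+1}\equiv1$ on $\cC$, we work with a weighted sup-norm built from the expected absorption time. Let $M=\sup_{s\in\cC}\E_s[\tau]<\infty$ and define $\xi(s)=\E_s[\tau]$ on $\cC$, so that $1\le\xi(s)\le M$. On $\cC$ the restricted operator reads $(\mathcal T_i v)(s)=\E_s[u^{(i)}_{t+1}]+\int_{\cC}v(s')P(ds'\mid s)$, with $v\equiv0$ imposed on $\cF$. The boundary is forced by the continuation indicator $\one\{S_{t+1}\notin\cF\}$ and the vanishing post-absorption cumulants, exactly as discussed after Eq.~\eqref{eq:vector_tdn_target}. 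The difference of two candidates therefore evolves under the substochastic kernel $P_{\cC\cC}$ only.

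\textbf{Existence.} Define $V_i^\star(s)=\E_s\bigl[\sum_{\ell=0}^{\tau-1}u^{(i)}_{t+\ell+1}\bigr]$. By Assumption~\ref{ass:bounded_cumulant} it is dominated by $\bar c_i\,\E_s[\tau]\le\bar c_i M$, so it is bounded and well defined via dominated convergence. The Bellman identity $V_i^\star=\mathcal T_iV_i^\star$ then follows from a first-step decomposition. We split the sum into the $\ell=0$ term plus the residual, and apply the Markov property at time $t+1$. On $\{S_{t+1}\in\cF\}$ the residual is zero, which matches the boundary value.

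\textbf{Uniqueness.} Suppose $v,w$ are two bounded fixed points vanishing on $\cF$, and set $d=v-w$. Then $d=P_{\cC\cC}d$, and iterating gives $d(s)=\E_s[d(S_{t+k})\one\{\tau>k\}]$. Hence $|d(s)|\le\|d\|_\infty\Prob_s(\tau>k)$. By Markov's inequality $\Prob_s(\tau>k)\le M/k\to0$ uniformly in $s$, so $d\equiv0$. This avoids needing an explicit contraction modulus.

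If a contraction statement is also wanted, we use the first-step relation $\xi=1+P_{\cC\cC}\xi$. It gives $P_{\cC\cC}\xi=\xi-1\le(1-1/M)\xi$, so $\mathcal T_i$ is a $(1-1/M)$-contraction in the weighted norm $\|v\|_\xi=\sup_s|v(s)|/\xi(s)$. This norm is equivalent to the sup-norm because $1\le\xi\le M$, and Banach's theorem then gives existence and uniqueness at once.

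\textbf{Main obstacle.} The delicate step is verifying that $\xi$ itself satisfies $\xi=1+P_{\cC\cC}\xi$ with $\xi$ finite and bounded. This requires the \emph{uniform} bound in Assumption~\ref{ass:termination_or_contraction}: pointwise finiteness of $\E_s[\tau]$ would not make the weighted norm equivalent to the sup-norm. For a finite transient set, the equivalence with $\rho(P_{\cC\cC})<1$ follows from the Neumann series $\xi=(I-P_{\cC\cC})^{-1}\one$.
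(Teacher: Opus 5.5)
Your proposal is correct, and its main line is the paper's own proof. Existence defines $V_i^\star(s)=\E_s[\sum_{\ell=0}^{\tau-1}u^{(i)}_{t+\ell+1}]$, bounds it by $\bar c_i M$ and splits off the first transition via the Markov property; uniqueness iterates the substochastic kernel to get $|d(s)|\le\|d\|_\infty\Prob_s(\tau>k)\to0$, with your Markov-inequality bound $M/k$ simply making the paper's pointwise decay uniform. Your weighted-norm argument, with $\xi(s)=\E_s[\tau]$ making $\mathcal T_i$ a $(1-1/M)$-contraction in $\|\cdot\|_\xi$, is a valid extra the paper does not give, and it supplies an explicit contraction modulus for the undiscounted case, which the paper says its claim~(i) argument cannot cover.
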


\begin{proof}
Write the return on the transient set as
\begin{equation}
G_t^{(i)}=\sum_{\ell=0}^{T-t-1} u_{t+\ell+1}^{(i)},
\end{equation}
which satisfies $|G_t^{(i)}|\le \bar c_i\,(T-t)$ pathwise. Per-state finiteness of
$\E_s[\tau]$ is \emph{not} sufficient for the sup-norm uniqueness argument below; we
require the \emph{uniform} condition $\sup_{s\in\cC}\E_s[\tau]<\infty$. For a finite
transient set this is equivalent to $\rho(P_{\cC\cC})<1$, where $P_{\cC\cC}$ is the
sub-stochastic transition matrix restricted to $\cC$, in which case the fundamental
matrix $(I-P_{\cC\cC})^{-1}$ exists and $V_i^\star=(I-P_{\cC\cC})^{-1}u^{(i)}$ on $\cC$.
Under this uniform condition the return is integrable with bounded expectation.
Define $V_i^\star(s)=\E[G_t^{(i)}\mid S_t=s]$ on the transient set, with the terminal
boundary condition $V_i^\star(s)=0$ for $s\in\cF$. Splitting off the first transition
and using the Markov property,
\[
V_i^\star(s)
=
\E\bigl[u_{t+1}^{(i)}\mid S_t=s\bigr]
+
\E\bigl[V_i^\star(S_{t+1})\mid S_t=s\bigr]
=
(\mathcal{T}_i V_i^\star)(s),
\]
where the interchange of the expectation and the tail sum is justified by dominated
convergence with dominating variable $\bar c_i\,(T-t)$, which is integrable. Thus
$V_i^\star$ is a fixed point. For uniqueness, let $v$ be any bounded fixed point with
$v\equiv 0$ on $\cF$, and set $h=v-V_i^\star$. Then $h(s)=\E[h(S_{t+1})\mid S_t=s]$, and
iterating $m$ times gives $h(s)=\E[h(S_{t+m})\one\{T>t+m\}\mid S_t=s]$ because $h$
vanishes on the absorbing set. Since $T<\infty$ almost surely and $h$ is bounded,
$|h(s)|\le\|h\|_\infty\,\Prob_s(T>t+m)\to 0$ as $m\to\infty$, so $h\equiv 0$ and the
fixed point is unique on the transient set. (This is the standard absorbing-chain
potential argument; see \citet[Ch.~3--4]{sutton2018reinforcement} for the episodic
formulation.)
\end{proof}

\subsection{Proof of Theorem~\ref{thm:bellman_td_convergence}, claim~(ii):
Linear TD$(n,\lambda)$ convergence}
\label{app:proof_projected_vector_td}

This is a special case of the on-policy linear TD($\lambda$) convergence
theorem of \citet{tsitsiklis1997analysis}, applied separately to each GVF component;
we only verify that its hypotheses hold in our setting. Because $\Gamma_{t+1}$ is
diagonal, the multi-output linear approximation $V_W(s)=W^\top\phi(s)$ with
$W=[w_1,\dots,w_m]\in\R^{d\times m}$ decouples into $m$ independent scalar linear-TD
recursions, one per column $w_i$, each driven by the scalar cumulant $u^{(i)}$ and
continuation $\gamma^{(i)}$. For each component: (i) the sampling process is ergodic
with stationary distribution $\Psi$ (Assumption~\ref{ass:ergodicity}); (ii) the
cumulants and features are bounded (Assumptions~\ref{ass:bounded_cumulant}
and~\ref{ass:bounded_features}), giving the required moment conditions; (iii) the step
sizes satisfy the Robbins--Monro conditions (Assumption~\ref{ass:step_size}); and
(iv) the projected fixed-point system $A_{i,n,\lambda}w_i=b_{i,n,\lambda}$ for the
truncated $\lambda$-return operator is stable in the sense made precise below. Two points
require care. First, the relevant normal matrix is \emph{not} the one-step TD(0) matrix
$\E_\Psi[\phi_t(\phi_t-\gamma_t^{(i)}\phi_{t+1})^\top]$, but the matrix of the
$\lambda$-return operator $\mathcal{T}_{i,n,\lambda}$ of Eq.~\eqref{eq:tdnl_operator}
that we actually train against,
\[
A_{i,n,\lambda}=\E_\Psi\!\bigl[\phi_t\,(\phi_t-\psi_t^{(i,n,\lambda)})^\top\bigr],
\qquad
b_{i,n,\lambda}=\E_\Psi\!\bigl[\phi_t\,g_t^{(i,n,\lambda)}\bigr],
\]
where $\psi_t^{(i,n,\lambda)}$ is the $\lambda$-weighted, $n$-truncated discounted
successor feature obtained by applying the same convex weights
$\{(1-\lambda)\lambda^{k-1}\}_{k=1}^{n-1}\cup\{\lambda^{n-1}\}$ that define
$\mathcal{T}_{i,n,\lambda}$ to the $k$-step discounted features
$\E_\Psi[\prod_{j=1}^{k}\gamma_{t+j}^{(i)}\,\phi_{t+k}\mid\mathcal H_t]$, and
$g_t^{(i,n,\lambda)}$ is the matching $\lambda$-weighted partial cumulant. Because each
$n$-step operator $\mathcal{T}_i^k$ is a $\bar\gamma_i$-contraction and
$\mathcal{T}_{i,n,\lambda}$ is a convex combination of them (claim~(i)), the standard
on-policy projected-TD argument for the $\lambda$-return operator
\citep{tsitsiklis1997analysis} applies to $A_{i,n,\lambda}$. Second, the required
stability condition is not mere nonsingularity: TD normal matrices are generally
nonsymmetric, and nonsingularity of a nonsymmetric matrix does not imply the stability
needed for the stochastic-approximation ODE. We therefore require that the symmetric part
$\tfrac12(A_{i,n,\lambda}+A_{i,n,\lambda}^\top)$ of $A_{i,n,\lambda}$ be positive definite,
a sufficient condition for $-A_{i,n,\lambda}$ to be Hurwitz (all eigenvalues have strictly
negative real part; Assumption~\ref{ass:well_posed_projected}); this is precisely what
on-policy sampling under the $\bar\gamma_i$-contraction $\mathcal{T}_{i,n,\lambda}$
guarantees. Under
Assumption~\ref{ass:termination_or_contraction} the discounting/termination condition of
\citet{tsitsiklis1997analysis} holds as well. The cited theorem then yields
$w_{i,t}\to w_i^\star=A_{i,n,\lambda}^{-1}b_{i,n,\lambda}$ almost surely for each $i$,
i.e., convergence of the stacked iterate $W_t$ to the unique projected TD($n,\lambda$)
fixed point, which is claim~(ii) of Theorem~\ref{thm:bellman_td_convergence}. \qed

\subsection{Proof of Theorem~\ref{thm:bellman_td_convergence}, claim~(iii):
Realizability and the TD--MC relation}
\label{app:td_mc_realizability}

Fix a scalar component $i$ and write $v_w(s)=\phi(s)^\top w$, assuming without loss of
generality that the features are linearly independent in $L_2(\Psi)$, so each function
in the span has a unique coefficient vector. The projected Bellman (TD) solution is
defined by
\begin{equation}
v_i^{\mathrm{TD}}=\Pi_\Psi \mathcal{T}_{i,n,\lambda}v_i^{\mathrm{TD}},
\end{equation}
which exists and is unique under Assumption~\ref{ass:well_posed_projected}. The MC
solution is the least-squares return regression
\begin{equation}
w_i^{\mathrm{MC}}=\arg\min_w \E_\Psi[(G_t^{(i)}-v_w(S_t))^2],
\end{equation}
and we write $v_i^{\mathrm{MC}}=v_{w_i^{\mathrm{MC}}}$.

\emph{MC limit.} Decompose $G_t^{(i)}=V_i^\star(S_t)+\varepsilon_t$ with
$\E[\varepsilon_t\mid S_t]=0$, since $V_i^\star(S_t)=\E[G_t^{(i)}\mid S_t]$ by definition of
the GVF. Then, for any $w$,
\[
\E_\Psi[(G_t^{(i)}-v_w(S_t))^2]
=
\E_\Psi[(V_i^\star(S_t)-v_w(S_t))^2]
+
\E_\Psi[\varepsilon_t^2],
\]
because the cross term vanishes by conditioning on $S_t$. Hence
$v_i^{\mathrm{MC}}=\Pi_\Psi V_i^\star$: the MC limit is the $L_2(\Psi)$ projection of
the true GVF onto the feature span.

\emph{Realizable case.} Suppose $V_i^\star=v_{w^\dagger}$ for some $w^\dagger$.
First, $v_i^{\mathrm{MC}}=\Pi_\Psi V_i^\star=V_i^\star$, so $w_i^{\mathrm{MC}}=w^\dagger$
by linear independence. Second, $V_i^\star$ is also the fixed point of
$\mathcal{T}_{i,n,\lambda}$ (claim~(i) of Theorem~\ref{thm:bellman_td_convergence}), so
$\Pi_\Psi\mathcal{T}_{i,n,\lambda}V_i^\star=\Pi_\Psi V_i^\star=V_i^\star$, i.e.,
$V_i^\star$ solves the projected Bellman equation; by the uniqueness in
Assumption~\ref{ass:well_posed_projected}, $v_i^{\mathrm{TD}}=V_i^\star$ as well.
Hence $v_i^{\mathrm{TD}}=v_i^{\mathrm{MC}}=V_i^\star$.

\emph{Non-realizable case.} In general the two limits solve different equations:
$v_i^{\mathrm{MC}}=\Pi_\Psi V_i^\star$ is the direct projection of $V_i^\star$, whereas
$v_i^{\mathrm{TD}}$ solves the composed fixed-point equation
$v=\Pi_\Psi\mathcal{T}_{i,n,\lambda}v$, whose solution is the projection of
$\mathcal{T}_{i,n,\lambda}v_i^{\mathrm{TD}}$ rather than of $V_i^\star$. These agree
only when $\Pi_\Psi V_i^\star$ happens to be a fixed point of
$\Pi_\Psi\mathcal{T}_{i,n,\lambda}$, which fails in general; the projected-Bellman error
bound $\|v_i^{\mathrm{TD}}-V_i^\star\|_\Psi
\le (1-\rho_{i,n,\lambda}^2)^{-1/2}\|\Pi_\Psi V_i^\star-V_i^\star\|_\Psi$ --- the
general-contraction-modulus form of the bound of \citet{tsitsiklis1997analysis}, with
$\rho_{i,n,\lambda}$ the $\mathcal{T}_{i,n,\lambda}$-contraction modulus from
claim~(i) of Theorem~\ref{thm:bellman_td_convergence} in place of the plain-TD(0) discount
$\bar\gamma_i$ --- quantifies the possible gap. Thus without realizability the TD and
MC limits may differ, as claimed.
\qed

\end{document}